\documentclass[journal,twoside,web]{ieeecolor}
\usepackage{cite}
\makeatletter
\let\NAT@parse\undefined
\makeatother
\usepackage{generic}
\usepackage{amsmath,amssymb,amsfonts}
\usepackage{algorithm}
\usepackage{graphicx}
\usepackage{textcomp}
\usepackage{booktabs}
\usepackage{array}
\usepackage{flushend}
\usepackage{threeparttable}
\usepackage{tabularx}
\usepackage{bm}
\usepackage{multirow}
\usepackage{makecell}
\usepackage{soul, color, xcolor}
\usepackage{algpseudocode}
\usepackage{hyperref}
\hypersetup{colorlinks = true,
			linkcolor = blue,
			citecolor = blue,
			urlcolor = blue}

\newtheorem{definition}{Definition}

\newtheorem{corollary}{Corollary}

\def\BibTeX{{\rm B\kern-.05em{\sc i\kern-.025em b}\kern-.08em
		T\kern-.1667em\lower.7ex\hbox{E}\kern-.125emX}}
\begin{document}
\title{Interaction-Stiffness-Guided Basis Allocation \\in Dynamic Movement Primitives for \\Efficient Skill Transfer}

\author{
	Chan Xu,
	Silu Chen,~\emph{Senior Member,~IEEE},
	Dehao Wang,
	Xiyu Chen,
	Dexin Jiang,\\ 
	Chi~Zhang,~\emph{Senior Member, IEEE},
	Guilin Yang, \emph{Member,~IEEE},\\
	Chenguang Yang, \emph{Fellow,~IEEE},
	and Zaojun Fang
\thanks{\textbf{This article has been accepted for publication in IEEE Transactions on Industrial Informatics. DOI: 10.1109/TII.2026.3738846.} (\textit {Corresponding authors: Silu Chen; Zaojun Fang}).
}
\thanks{Chan Xu, Xiyu Chen, and Dexin Jiang are with the Zhejiang Key Laboratory of Precision Actuation and Intelligent Robotics, Ningbo Institute of Materials Technology and Engineering, Chinese Academy of Sciences, Ningbo 315201, China, and also with the University of Chinese Academy of Sciences, Beijing 100049, China (e-mail: xuchan@nimte.ac.cn; chenxiyu@nimte.ac.cn; jiangdexin@nimte.ac.cn).
}
\thanks{Silu Chen, Dehao Wang, Chi Zhang, Guilin Yang, and Zaojun Fang are with the Zhejiang Key Laboratory of Precision Actuation and Intelligent Robotics, Ningbo Institute of Materials Technology and Engineering, Chinese Academy of Sciences, Ningbo 315201, China (e-mail: chensilu@nimte.ac.cn; wangdeh@nimte.ac.cn; zhangchi@nimte.ac.cn; glyang@nimte.ac.cn; fangzaojun@nimte.ac.cn).
}
\thanks{Chenguang Yang is with the Department of Computing, The Hong Kong Polytechnic University, HKSAR, China (e-mail: cyang@ieee.org).
}
\thanks{\copyright{} 2026 IEEE. Personal use of this material is permitted.
	Permission from IEEE must be obtained for all other uses, in any
	current or future media, including reprinting/republishing this
	material for advertising or promotional purposes, creating new
	collective works, for resale or redistribution to servers or lists,
	or reuse of any copyrighted component of this work in other works.}
}	

\maketitle
\begin{abstract}
	Dynamic Movement Primitives (DMPs) provide a compact and stable formulation for trajectory representation and generalization in robot skill learning. However, their predefined basis layout limits the allocation of approximation capacity according to stage-dependent precision requirements. To address this issue, this article proposes Stage-Criticality-Guided Dynamic Movement Primitives (SC-DMPs) with adaptive basis allocation for precision-critical skill learning. Operator-robot interaction stiffness and a trajectory-consistency cue derived from cross-demonstration task-space variability are integrated to construct a stage-criticality index. Guided by this index, basis centers are redistributed in normalized time through inverse cumulative criticality and mapped to the canonical phase domain, while their bandwidths are refined to adjust local approximation support. This enables denser and more flexible representation at high-criticality stages while retaining sparser allocation elsewhere. Experiments on handwriting trajectories and three real-robot tasks show that the inferred criticality is concentrated in geometrically demanding and task-constrained regions. Comparisons with DMPs, ProMPs, ProDMP, GP-MP, and KMP demonstrate improved trajectory reproduction, endpoint generalization, and task-critical accuracy while retaining a compact model and the stable structure of classical DMPs.
\end{abstract}
\begin{IEEEkeywords}
	Interaction stiffness, basis allocation, dynamic movement primitives, learning from demonstration.
\end{IEEEkeywords}
\maketitle

\section{Introduction}
\IEEEPARstart{L}{earning} from Demonstration (LfD) has emerged as an effective paradigm for transferring dexterous human motion skills to robots, especially in settings where explicit programming is difficult or costly~\cite{yu2025human}. Representative applications include fine insertion, tight-clearance assembly, and other operations with stringent precision requirements~\cite{hu2023robot}. In such tasks, the robot must not only reproduce the overall shape of the demonstrated trajectory, but also maintain high geometric accuracy at critical local stages, which are often less tolerant to error than other segments~\cite{bian2023learning}. Accordingly, achieving global fidelity and local accuracy at task-critical stages is central to precision-oriented trajectory learning~\cite{huang2025task}.

Dynamic Movement Primitives (DMPs) are one of the most representative trajectory-encoding frameworks in LfD, owing to their compact dynamical formulation, stability guarantees, and convenient spatiotemporal modulation~\cite{sun2022type}. Classical DMPs represent demonstrated motions through stable dynamical systems, where nonlinear forcing terms encode task-specific trajectory profiles~\cite{ijspeert2013dynamical}. In particular, the forcing term is approximated by basis functions preset along the canonical phase variable, where the basis layout is associated with the resulting reproduction performance~\cite{wang2025pi2}. Although this design is generally effective, it provides limited stage-wise adaptability for precision-critical skill representation~\cite{Zhang2023Robot}. Trajectory adaptability and tracking accuracy can be improved by integrating DMPs with model predictive control, reinforcement learning, and adaptive or neural feedback~\cite{li2025modified,ma2025model,yang2019robot}. However, such approaches may require additional environmental information or interaction, online computation, or specialized controller design, while local approximation errors introduced during trajectory encoding are not directly addressed~\cite{zhang2024using}. 

By contrast, representation-level methods modify movement primitives to improve trajectory reproduction and generalization, primarily through probabilistic or kernel-based formulations and parameter refinement. The former represents demonstrated trajectories and their uncertainty through probability distributions or kernel mappings, as in ProMPs~\cite{paraschos2013probabilistic}, ProDMP~\cite{li2023prodmp}, GP-MP~\cite{jin2023gaussian}, and KMP~\cite{huang2019kernelized}. However, local reproduction fidelity is typically regulated through covariance structures, kernel design, or hyperparameter settings, without explicitly allocating approximation support according to stage-dependent precision requirements. The latter refines movement-primitive parameterization by adjusting the number, locations, widths, or forms of basis functions, providing more explicit control over local approximation support~\cite{braglia2025phase,stulp2024fitting}. Among these refinements, basis-center redistribution relocates approximation support without increasing the basis number or altering the stable DMP dynamics, making it an efficient strategy for precision-critical skill learning~\cite{shi2025exploring}. Existing methods refine trajectory representations based on curvature, variance, recovered temporal structure, or mutual-information-based timestep weights~\cite{rozo2022orientation,jankowski2022key,colome2022mutual}. These criteria characterize trajectory structure or informativeness but do not necessarily indicate task-imposed motion tolerances. For example, peg-in-hole insertion may follow a geometrically simple path yet require high positioning accuracy under tight clearances~\cite{moreno2024obstacles}.

In precision-oriented physical interaction, increased stiffness may accompany stronger regulation of positional deviations, providing task information that is not evident from trajectory geometry alone~\cite{wen2023compliance,biagiotti2023robot}. Robot-environment contact stiffness can reveal such constraints during sustained contact, but its force-displacement-based estimation may be unreliable in noncontact, weak-contact, or contact-transition stages~\cite{zeng2024hierarchical,xing2023dynamic}. Operator-robot interaction stiffness can instead be estimated from muscle activation and upper-limb configuration without requiring robot-environment contact-force measurements, making it available across different stages of kinesthetic teaching~\cite{wang2025upper,xu2026extreme}. Existing stiffness-aware skill learning methods primarily incorporate stiffness or compliance into task execution through variable-impedance profiles, compliant dynamical systems, or programmed interaction behaviors~\cite{meattini2025programming,ge2024learning,wu2023impedance}. Alternatively, operator-robot interaction stiffness can serve as a task-informed cue for trajectory representation. Although informative, it may not fully characterize task criticality on its own, since some dynamic or periodic manipulation tasks may instead depend on compliant behavior~\cite{nah2023learning,liu2025lamps}. Accordingly, operator-robot interaction stiffness and cross-demonstration trajectory consistency may offer complementary cues for characterizing stage-dependent reproduction requirements.

In this work, operator-robot interaction stiffness and a trajectory-consistency cue derived from cross-demonstration variability are integrated to construct a stage-criticality index. Based on this index, Stage-Criticality-Guided Dynamic Movement Primitives (SC-DMPs) redistribute basis centers and adapt their bandwidths to allocate approximation support according to stage-dependent reproduction requirements. The main contributions are as follows.
\begin{enumerate}
	\item A physically motivated stage-criticality index is formulated by integrating operator-robot interaction stiffness and cross-demonstration task-space variability, providing an interpretable estimate of stage-wise reproduction-fidelity requirements.
	\item A criticality-guided basis allocation strategy is developed by redistributing centers through inverse cumulative criticality and adapting bandwidths to refine local support in high-criticality stages.
	\item Comprehensive experiments on handwriting trajectories and real-robot tasks compare SC-DMPs with DMPs, ProMPs, ProDMP, GP-MP, and KMP, demonstrating improved trajectory reproduction, generalization, and task-critical accuracy while retaining a compact model.
\end{enumerate}

The rest of this article is organized as follows. Section~\ref{PMD} formulates a probabilistic model of multiple demonstrations. Section~\ref{S_SC-DMPs} develops the stage-criticality index and SC-DMPs method. Section~\ref{Experiments} validates the effectiveness of the proposed method. Finally, Section~\ref{Conclusion} concludes this article.

\section{Probabilistic Modeling of Demonstrations}\label{PMD}
\begin{table}[t]
	\centering
	\caption{Summary of main symbols.}
	\label{Tab_symbols}
	\footnotesize
	\renewcommand{\arraystretch}{0.98}
	\setlength{\tabcolsep}{3.4pt}
	\begin{tabularx}{\columnwidth}{
			@{}
			>{\raggedright\arraybackslash}p{0.295\columnwidth}
			>{\raggedright\arraybackslash}X
			@{}
		}
		\toprule
		{Symbol} & {Description} \\
		\midrule
		$M$, $T$, $D$
		& Number of demonstrations, number of aligned time steps, and task-space dimension, respectively. \\
		$t_i$
		& Normalized time at the $i$-th aligned time step. \\
		$\bm{P}_{i,m}$, $\bm{K}_{i,m}$, $\bm{Q}_{i,m}$
		& Task-space position, axis-wise interaction stiffness, and their concatenated motion-state vector. \\
		$N_{\rm G}$, $\pi_j$, $\bm{\mu}_j$, $\bm{\Lambda}_j$
		& Number of Gaussian components and the weight, mean, and covariance of the $j$-th component. \\
		$\gamma_j(t)$, $\bm{\Lambda}_{Qt,j}$, $\bm{\Lambda}_{QQ,j}$
		& Posterior responsibility and covariance blocks used in GMR conditioning. \\
		$\bar{\bm{P}}(t)$, $\bar{\bm{K}}(t)$, $\bm{\sigma}_{P}^{2}(t)$
		& GMR reference trajectory, stiffness profile, and task-space variability. \\
		$\tilde{\bm{K}}_i$, $\tilde{\bm{v}}_i$, $\beta_1$, $\beta_2$
		& Normalized stiffness and variability descriptors and their fusion weights. \\
		$\bm{p}_i$, $\bm{r}_i$, $C_i$
		& Dimension-wise prior, temporally refined score, and scalar stage-criticality index. \\
		$s_i$, $N$
		& Canonical phase variable and number of DMP basis functions. \\
		$\Psi_n(s)$, $c_n$, $h_n$, $\bm{w}_n$
		& Gaussian basis function and its center, bandwidth, and weight vector. \\
		$F_i$
		& Cumulative stage criticality used for basis-center redistribution. \\
		$h_n^{(0)}$, $b_n$, $\mathcal{B}$
		& Reference bandwidth, bandwidth scaling factor, and candidate scaling set. \\
		$\bm{b}$, $\mathcal{J}(\bm{b})$
		& Bandwidth-scaling vector and forcing-term reconstruction objective. \\
		\bottomrule
	\end{tabularx}
	\vspace{-0.6em}
\end{table}

For a given task, $M$ demonstrations are collected, each consisting of a sequence of time stamps and corresponding motion observations. Dynamic Time Warping (DTW) is applied to temporally align these demonstrations to a common length $T$. The aligned time axis is then normalized to $[0,1]$, and the resulting dataset is represented as
\begin{equation}\label{Eq_D}
	\mathcal{D} =
	\left\{
	\left\{ (t_i,\bm{Q}_{i,m}) \right\}_{i=1}^{T}
	\right\}_{m=1}^{M},
	\,\,\,
	t_i=\frac{i-1}{T-1},
\end{equation}
where $t_i\in[0,1]$ is the normalized time at the $i$-th aligned time step, and $\bm{Q}_{i,m}\in \mathbb{R}^{2D}$ is the motion-state vector defined as
\begin{equation}\label{Eq_bmq}
	\bm{Q}_{i,m} =
	\left[
	\bm{P}_{i,m}^{\top}\;\;
	\bm{K}_{i,m}^{\top}
	\right]^{\top}.
\end{equation}
Here, $\bm{P}_{i,m}\in\mathbb{R}^{D}$ is the task-space position, $\bm{K}_{i,m}\in\mathbb{R}^{D}$ is the axis-wise interaction stiffness formed by the diagonal entries of the estimated endpoint stiffness matrix, and $D$ is the task-space dimension. The augmented vector is defined as
\begin{equation}\label{Eq_xi}
	\bm{\xi} =
	[t,\bm{Q}^{\top}]^{\top}
	\in \mathbb{R}^{1+2D},
\end{equation}
which serves as the basis for probabilistic encoding. For ease of reference, the main symbols used in the probabilistic modeling and SC-DMPs formulation are summarized in Table~\ref{Tab_symbols}.

\begin{figure*}
	\centering
	\includegraphics[width=0.98\textwidth]{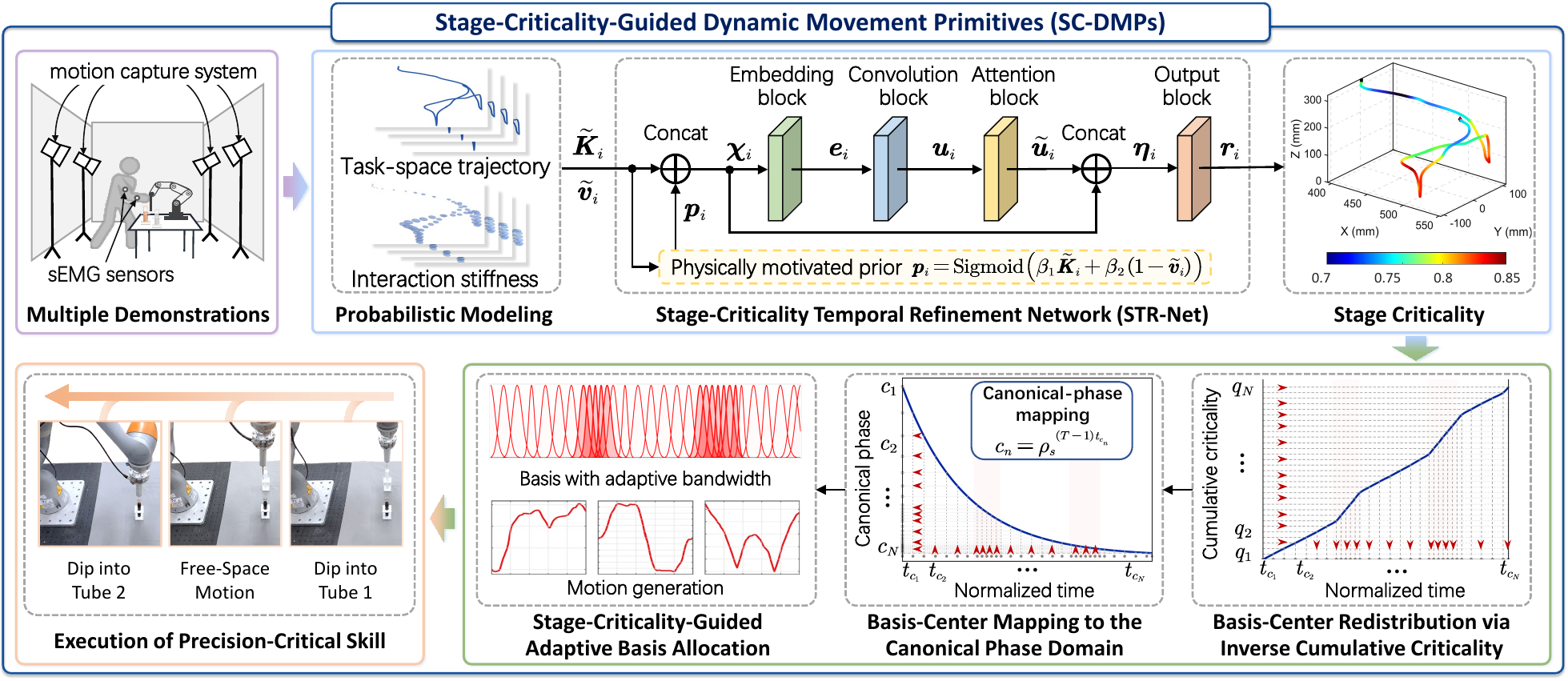}\\
	\caption{Overall framework of the proposed SC-DMPs. Operator-robot interaction stiffness and cross-demonstration task-space variability are used to estimate a stage-criticality index, which guides basis-center redistribution and adaptive bandwidth design for precision-critical skill reproduction.}
	\label{Fig_main}
	\vspace{-0.6em}
\end{figure*}

To capture the correlations among temporal evolution, task-space motion, and interaction stiffness, a Gaussian mixture model (GMM) is adopted to represent the joint distribution
\begin{equation}\label{Eq_GMM}
	p(\bm{\xi})=
	\sum_{j=1}^{N_{\rm G}}
	\pi_j\,
	\mathcal{N}\!\left(
	\bm{\xi};
	\bm{\mu}_j,\bm{\Lambda}_j
	\right),
\end{equation}
where $N_{\rm G}$ is the number of Gaussian components, and $\pi_j$ is the corresponding mixture weight satisfying $\sum_{j=1}^{N_{\rm G}}\pi_j=1$. The mean vector and covariance matrix of the $j$-th component are partitioned as
\begin{equation}\label{Eq_mu_partition}
	\bm{\mu}_j =
	\begin{bmatrix}
		\mu_{t,j}\\
		\bm{\mu}_{P,j}\\
		\bm{\mu}_{K,j}
	\end{bmatrix},
\end{equation}
and
\begin{equation}\label{Eq_lambda_partition}
	\bm{\Lambda}_j=
	\begin{bmatrix}
		\lambda_{tt,j} & \bm{\Lambda}_{tP,j} & \bm{\Lambda}_{tK,j}\\
		\bm{\Lambda}_{Pt,j} & \bm{\Lambda}_{PP,j} & \bm{\Lambda}_{PK,j}\\
		\bm{\Lambda}_{Kt,j} & \bm{\Lambda}_{KP,j} & \bm{\Lambda}_{KK,j}
	\end{bmatrix},
\end{equation}
where $\lambda_{tt,j}\in\mathbb{R}$ is the time variance of the $j$-th component.

Gaussian mixture regression (GMR) is then employed to infer the conditional distribution of $\bm{Q}$ given $t$:
\begin{equation}\label{Eq_GMR}
	p(\bm{Q}\mid t)=
	\sum_{j=1}^{N_{\rm G}}
	\gamma_j(t)\,
	\mathcal{N}\!\left(
	\bm{Q};
	\bm{\mu}_{Q\mid t,j}(t),
	\bm{\Lambda}_{Q\mid t,j}
	\right),
\end{equation}
where the posterior responsibility of the $j$-th component is
\begin{equation}\label{Eq_hg}
	\gamma_j(t)=
	\frac{
		\pi_j\,\mathcal{N}(t;\mu_{t,j},\lambda_{tt,j})
	}{
		\sum_{p=1}^{N_{\rm G}}
		\pi_p\,\mathcal{N}(t;\mu_{t,p},\lambda_{tt,p})
	}.
\end{equation}

For compactness, define
\begin{equation}\label{Eq_compact_blocks}
	\bm{\Lambda}_{Qt,j}=
	\begin{bmatrix}
		\bm{\Lambda}_{Pt,j}\\
		\bm{\Lambda}_{Kt,j}
	\end{bmatrix},
	\,\,\,
	\bm{\Lambda}_{QQ,j}=
	\begin{bmatrix}
		\bm{\Lambda}_{PP,j} & \bm{\Lambda}_{PK,j}\\
		\bm{\Lambda}_{KP,j} & \bm{\Lambda}_{KK,j}
	\end{bmatrix},
\end{equation}
and
\begin{equation}
	\bm{\mu}_{Q,j}=
	\begin{bmatrix}
		\bm{\mu}_{P,j}\\
		\bm{\mu}_{K,j}
	\end{bmatrix}.
\end{equation}
Then the conditional mean $\bm{\mu}_{Q\mid t,j}(t)$ and conditional covariance $\bm{\Lambda}_{Q\mid t,j}$ of the $j$-th component are given by
\begin{equation}\label{Eq_cond_mean}
	\bm{\mu}_{Q\mid t,j}(t)=
	\bm{\mu}_{Q,j}+
	\bm{\Lambda}_{Qt,j}\lambda_{tt,j}^{-1}(t-\mu_{t,j}),
\end{equation}
and
\begin{equation}\label{Eq_cond_cov}
	\bm{\Lambda}_{Q\mid t,j}=
	\bm{\Lambda}_{QQ,j}-
	\bm{\Lambda}_{Qt,j}\lambda_{tt,j}^{-1}\bm{\Lambda}_{tQ,j}.
\end{equation}

By further partitioning
\begin{equation}
	\bm{\mu}_{Q\mid t,j}(t)=
	\begin{bmatrix}
		\bm{\mu}_{P\mid t,j}(t)\\
		\bm{\mu}_{K\mid t,j}(t)
	\end{bmatrix},
\end{equation}
the reference task-space trajectory and stiffness profile are obtained from the mixture expectation as
\begin{equation}\label{Eq_refxK}
	\bar{\bm{P}}(t)\!=\!
	\sum_{j=1}^{N_{\rm G}}\!
	\gamma_j(t)\bm{\mu}_{P\mid t,j}(t),
	\,\,\,
	\bar{\bm{K}}(t)\!=\!
	\sum_{j=1}^{N_{\rm G}}\!
	\gamma_j(t)\bm{\mu}_{K\mid t,j}(t).
\end{equation}
To characterize the task-space trajectory variability across demonstrations, let $\bm{\Lambda}_{P\mid t,j}$ denote the position block of $\bm{\Lambda}_{Q\mid t,j}$, and let $\lambda_{P,d\mid t,j}$ denote its $d$-th diagonal entry, where $d=1,2,\ldots,D$. The variance of $d$-th dimension is computed as
\begin{equation}\label{Eq_sigma_xr}
	\sigma_{P,d}^{2}(t)=
	\sum_{j=1}^{N_{\rm G}}
	\gamma_j(t)
	\left(
	\lambda_{P,d\mid t,j}+\mu_{P,d\mid t,j}^{2}(t)
	\right)
	-\bar{P}_{d}^{2}(t),
\end{equation}
where $\mu_{P,d\mid t,j}(t)$ and $\bar{P}_d(t)$ denote the $d$-th entries of $\bm{\mu}_{P\mid t,j}(t)$ and $\bar{\bm{P}}(t)$, respectively. The corresponding variance vector is
\begin{equation}\label{Eq_sigma_x_vec}
	\bm{\sigma}_{P}^{2}(t)=
	\left[
	\sigma_{P,1}^{2}(t),\sigma_{P,2}^{2}(t),\ldots,\sigma_{P,D}^{2}(t)
	\right]^{\top}\!\!\in\mathbb{R}^{D}.
\end{equation}

In~\eqref{Eq_refxK}, larger interaction-stiffness values indicate stronger active regulation of motion deviations, whereas smaller variances in~\eqref{Eq_sigma_xr}-\eqref{Eq_sigma_x_vec} indicate stronger cross-demonstration consistency. These quantities are therefore adopted as complementary task-informed cues for estimating stage-dependent reproduction requirements and guiding adaptive basis allocation.

\section{Stage-Criticality-Guided Adaptive DMPs}\label{S_SC-DMPs}
This section constructs a stage-criticality index from interaction stiffness and task-space trajectory variability to guide adaptive basis allocation in the canonical phase domain. The overall framework of SC-DMPs is shown in Fig.~\ref{Fig_main}.

\subsection{Physically Motivated Stage Criticality}
At the $i$-th aligned time step, define the interaction stiffness and task-space variability descriptors as
\begin{equation}\label{Eq_vi}
	\bar{\bm{K}}_i = \bar{\bm{K}}(t_i) \in \mathbb{R}^{D},
	\quad
	\bm{v}_i = \bm{\sigma}_{P}^{2}(t_i) \in \mathbb{R}^{D}.
\end{equation}
To ensure numerical consistency, both descriptors are normalized to $[0,1]$, yielding $\tilde{\bm{K}}_i$ and $\tilde{\bm{v}}_i$. Based on the physical intuition that task-critical phases are typically characterized by high interaction stiffness and low task-space variability, a dimension-wise stage-criticality prior is defined as
\begin{equation}\label{Eq_prior}
	\bm{p}_i =
	\operatorname{Sigmoid}
	\!\left(
	\beta_1 \tilde{\bm{K}}_i + \beta_2(\bm{1}-\tilde{\bm{v}}_i)
	\right)
	\in (0,1)^{D},
\end{equation}
where $\bm{1}\in\mathbb{R}^{D}$ is a vector of ones, and $\beta_1,\beta_2\ge 0$ weight the contributions of interaction stiffness and inverse task-space variability. The prior $\bm{p}_i$ provides a physically interpretable initial estimate of dimension-wise stage criticality.

Since~\eqref{Eq_prior} is evaluated independently at each aligned time step, local stiffness-estimation errors, misalignment, and measurement noise may lead to criticality fluctuations. To mitigate these isolated variations, a lightweight Stage-Criticality Temporal Refinement Network (STR-Net) is introduced to refine the physically motivated prior by incorporating temporal context. First, the input feature at step $i$ is defined as
\begin{equation}\label{Eq_input_feature}
	\bm{\chi}_i =
	\left[
	\tilde{\bm{K}}_i^{\top}\;\;
	\tilde{\bm{v}}_i^{\top}\;\;
	\bm{p}_i^{\top}
	\right]^{\top}\!
	\in \mathbb{R}^{3D}.
\end{equation}
The input is projected into a latent space through
\begin{equation}\label{Eq_embedding}
	\bm{e}_i =
	\varphi
	\!\left(
	\bm{W}_{\rm emb}\bm{\chi}_i+\bm{\theta}_{\rm emb}
	\right)
	\in \mathbb{R}^{d_{\rm model}},
\end{equation}
where $\bm{W}_{\rm emb} \!\in\! \mathbb{R}^{d_{\rm model}\times 3D}$ and $\bm{\theta}_{\rm emb} \!\in\! \mathbb{R}^{d_{\rm model}}$ are learnable parameters, and $\varphi(\cdot)$ denotes an element-wise nonlinear activation function. To capture short-range temporal dependencies, the embedding sequence $\{\bm{e}_i\}_{i=1}^{T}$ is processed by a convolutional layer
\begin{equation}\label{Eq_conv}
	\bm{u}_i =
	\operatorname{Conv}
	\big(
	[\bm{e}_1,\ldots,\bm{e}_T]
	\big)_i
	\in \mathbb{R}^{d_{\rm model}}.
\end{equation}
A lightweight attention mechanism is then used to emphasize globally critical time steps
\begin{equation}\label{Eq_attention}
	\alpha_i^{\rm att} =
	\frac{
		\exp\!\left(
		\bm{a}_{\rm att}^{\top}
		\tanh(\bm{W}_{\rm att}\bm{u}_i)
		\right)
	}{
		\sum_{p=1}^{T}
		\exp\!\left(
		\bm{a}_{\rm att}^{\top}
		\tanh(\bm{W}_{\rm att}\bm{u}_p)
		\right)
	},
\end{equation}
where $\bm{W}_{\rm att}$ and $\bm{a}_{\rm att}$ are learnable attention parameters. The context-enhanced latent feature is then written as
\begin{equation}\label{Eq_context}
	\tilde{\bm{u}}_i =
	(1+\alpha_i^{\rm att})\bm{u}_i.
\end{equation}

To keep the refinement anchored to the original physical cues, the output layer uses the fused representation
\begin{equation}\label{Eq_fused}
	\bm{\eta}_i =
	\left[
	\tilde{\bm{u}}_i^{\top}\;\;
	\tilde{\bm{K}}_i^{\top}\;\;
	\tilde{\bm{v}}_i^{\top}\;\;
	\bm{p}_i^{\top}
	\right]^{\top}
	\in \mathbb{R}^{d_{\rm model}+3D},
\end{equation}
and predicts refined stage-criticality scores
\begin{equation}\label{Eq_ri}
	\bm{r}_i =
	\operatorname{Sigmoid}
	\!\left(
	\bm{W}_{\rm out}^{\top}\bm{\eta}_i+\bm{\theta}_{\rm out}
	\right)
	\in (0,1)^{D},
\end{equation}
where $\bm{W}_{\rm out}\in\mathbb{R}^{(d_{\rm model}+3D)\times D}$ and $\bm{\theta}_{\rm out}\in\mathbb{R}^{D}$.

The STR-Net is trained to refine the initial physically motivated estimate under consistency and temporal-smoothness
\begin{equation}\label{Eq_loss}
	\mathcal{L} \!=\!
	\frac{1}{T}
	\sum_{i=1}^{T}\sum_{d=1}^{D}
	\ell_{\rm Huber}(r_{i,d}\!-\!p_{i,d})
	\!+\!
	\frac{\zeta}{T\!-\!1}
	\sum_{i=1}^{T\!-\!1}
	\left\|
	\bm{r}_{i+1}\!-\!\bm{r}_i
	\right\|_2^2,
\end{equation}
where $\ell_{\rm Huber}(\cdot)$ is the Huber loss, and $\zeta>0$ controls the strength of smoothness. Notably, STR-Net does not replace the physically motivated prior, but regularizes it in the temporal domain to suppress isolated fluctuations while preserving coherent criticality intervals. 

Then, the dimension-wise scores refined by STR-Net are aggregated into a scalar stage-criticality index.
\begin{definition}[Stage-Criticality Index]
	For the refined score vector $\bm{r}_i \in (0,1)^{D}$ at the $i$-th aligned time step, the stage-criticality index is defined as
	\begin{equation}\label{Eq_Ci_prop}
		C_i = \frac{1}{D}\sum_{d=1}^{D} r_{i,d}.
	\end{equation}
\end{definition}

Since $\bm{r}_i\in(0,1)^D$, the averaged index satisfies $C_i\in(0,1)$ and provides a discrete stage-criticality estimate at the $i$-th aligned time step. A larger $C_i$ suggests a stronger stage-wise reproduction-fidelity requirement.

\subsection{Stage-Criticality-Guided Basis Allocation}
For a $D$-dimensional task-space trajectory, the classical discrete-time DMP formulation is given by
\begin{subequations}\label{Eq_DMP}
	\begin{align}
		\bm{z}_{i+1} &= \bm{z}_{i} + \frac{\Delta t}{\tau} \left[ \alpha_z \left( \beta_z(\bm{g}-\bm{P}_{i}) - \bm{z}_{i} \right) + \bm{f}(s_i) \right],
		\label{eq:residual_dmp_discrete_a} \\
		\bm{P}_{i+1} &= \bm{P}_{i} + \frac{\Delta t}{\tau} \bm{z}_{i+1},
		\label{eq:residual_dmp_discrete_b} \\
		s_{i+1} &= s_i - \frac{\Delta t}{\tau} \alpha_s s_i,
		\label{eq:residual_dmp_discrete_c}
	\end{align}
\end{subequations}
where $\bm{P},\bm{z},\bm{g}\in\mathbb{R}^{D}$ denote the task-space position, velocity-related auxiliary state, and goal position, respectively; $s$ is the canonical phase variable; $\tau$ is the temporal scaling factor; $\Delta t$ is the physical sampling interval; and $\alpha_z,\beta_z,\alpha_s>0$ are positive constants. The nonlinear forcing term is defined as a function of the canonical phase variable
\begin{equation}\label{Eq_fx}
	\bm{f}(s) =
	\left(
	\frac{\sum_{n=1}^{N} \bm{w}_n \Psi_n(s)}
	{\sum_{n=1}^{N} \Psi_n(s)}
	\right)s,
\end{equation}
with Gaussian basis functions
\begin{equation}\label{Eq_psi}
	\Psi_n(s)=
	\exp\!\left(-h_n(s-c_n)^2\right),
	\quad n=1,\ldots,N,
\end{equation}
where $c_n$, $h_n$, and $\bm{w}_n\in\mathbb{R}^{D}$ denote the center, bandwidth, and weight vector of the $n$-th basis function, respectively.

Instead of uniformly placing basis centers in normalized time, SC-DMPs redistribute them according to accumulated stage criticality. By assigning equal accumulated criticality to adjacent center intervals, regions with larger $C_i$ receive denser basis support. The following cumulative profile is introduced.
\begin{definition}[Cumulative Criticality]\label{Def_cumulative_criticality}
	The cumulative criticality on the aligned normalized-time grid is defined as
	\begin{equation}\label{Eq_Fs}
		F_1 = 0,\quad
		F_{i+1} = F_i + C_i,\quad i = 1,2,\ldots,T-1 .
	\end{equation}
\end{definition}
Since $C_i\in(0,1)$, the sequence $\{F_i\}_{i=1}^{T}$ is strictly increasing. Moreover, recursively applying~\eqref{Eq_Fs} yields
	\begin{equation}
		F_T
		= F_1+\sum_{i=1}^{T-1}C_i
		= \sum_{i=1}^{T-1}C_i.
	\end{equation}
Thus, $F_i$ provides a monotonic cumulative profile of stage criticality for subsequent basis-center redistribution.

To determine the basis centers, the cumulative-criticality range $[0,F_T]$ is uniformly partitioned into $N-1$ subintervals. The target cumulative-criticality level associated with the $n$-th basis center is defined as
\begin{equation}\label{Eq_qn}
	q_n =
	\frac{n-1}{N-1}F_T,
	\quad n=1,2,\ldots,N .
\end{equation}
For each $q_n$, let $k_n\in\{1,2,\ldots,T-1\}$ be the index satisfying
\begin{equation}\label{Eq_kn}
	F_{k_n}\le q_n \le F_{k_n+1}.
\end{equation}
The relative position of $q_n$ within this interval is measured by
\begin{equation}\label{Eq_eta_n}
	\eta_n =
	\frac{q_n-F_{k_n}}{F_{k_n+1}-F_{k_n}},
	\quad n=1,2,\ldots,N .
\end{equation}
The corresponding basis center in normalized time is then obtained by linear interpolation
\begin{equation}\label{Eq_tcenter}
	t_{c_n} = (1-\eta_n)t_{k_n} + \eta_n t_{k_n+1},
	\quad n=1,2,\ldots,N .
\end{equation}
Since the target cumulative-criticality levels $\{q_n\}_{n=1}^{N}$ are uniformly spaced, adjacent basis-center intervals carry equal accumulated criticality. Consequently, regions with larger $C_i$ receive denser basis support in normalized time.

The basis centers are then mapped to the canonical phase domain according to the discrete canonical dynamics in~\eqref{eq:residual_dmp_discrete_c}. Let
\begin{equation}\label{Eq_lambda}
	\rho_s =
	1-\frac{\Delta t}{\tau}\alpha_s,
	\quad 0<\rho_s <1 .
\end{equation}
Starting from $s_1=1$, the phase sequence induced by~\eqref{eq:residual_dmp_discrete_c} is
\begin{equation}\label{Eq_si_discrete}
	s_i = \rho_s^{i-1}
	= \rho_s^{(T-1)t_i},
	\quad i=1,2,\ldots,T .
\end{equation}
Each criticality-guided center $t_{c_n}$ in normalized time is mapped to the corresponding canonical phase center as
\begin{equation}\label{Eq_centers}
	c_n =
	\rho_s^{(T-1)t_{c_n}},
	\quad n=1,2,\ldots,N .
\end{equation}
Since $0<\rho_s<1$, this mapping is strictly decreasing with respect to normalized time. Hence, the resulting centers satisfy
\begin{equation}
	1=c_1>c_2>\cdots>c_N=s_T,
\end{equation}
where
\begin{equation}
	s_T=\rho_s^{T-1}.
\end{equation}
This mapping preserves the criticality-guided allocation obtained in normalized time while making the basis centers consistent with the canonical phase evolution of DMPs.

The proposed allocation reduces to the standard center placement of classical DMPs under constant stage criticality.
\begin{corollary}[Reduction to Standard Center Placement]\label{Cor_classic}
	If the stage-criticality sequence is constant, i.e.,
	$C_i\equiv C_0>0$ for $i=1,2,\ldots,T-1$, then the criticality-guided centers in normalized time reduce to the uniformly spaced centers
	\begin{equation}\label{Eq_classic_time_center}
		t_{c_n}=\frac{n-1}{N-1},
		\quad n=1,2,\ldots,N .
	\end{equation}
	The corresponding centers in the canonical phase domain are
	\begin{equation}\label{Eq_classic_center}
		c_n=
		\rho_s^{(T-1)\frac{n-1}{N-1}},
		\quad n=1,2,\ldots,N .
	\end{equation}
\end{corollary}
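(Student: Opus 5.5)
The plan is to compute everything explicitly under $C_i\equiv C_0$, starting from Definition~\ref{Def_cumulative_criticality}. Unrolling the recursion~\eqref{Eq_Fs} gives $F_i=(i-1)C_0$ for $i=1,\ldots,T$, so $F_T=(T-1)C_0$. Since $t_i=(i-1)/(T-1)$ by~\eqref{Eq_D}, this says that $F_i=F_T\,t_i$. In other words, the cumulative criticality is an exact linear function of normalized time on the grid. Substituting into~\eqref{Eq_qn} gives the target levels $q_n=\frac{n-1}{N-1}(T-1)C_0$.

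Next I locate the bracketing index and the interpolation weight. From~\eqref{Eq_kn}, the condition $F_{k_n}\le q_n\le F_{k_n+1}$ becomes $k_n-1\le \frac{(n-1)(T-1)}{N-1}\le k_n$ after dividing by $C_0>0$. From~\eqref{Eq_eta_n}, the denominator $F_{k_n+1}-F_{k_n}$ equals $C_0$. This gives
\begin{equation*}
\eta_n=\frac{(n-1)(T-1)}{N-1}-(k_n-1)\in[0,1].
\end{equation*}
Plugging this into~\eqref{Eq_tcenter}, and using $t_{k_n+1}-t_{k_n}=1/(T-1)$, yields
\begin{equation*}
t_{c_n}=t_{k_n}+\frac{\eta_n}{T-1}=\frac{k_n-1}{T-1}+\frac{n-1}{N-1}-\frac{k_n-1}{T-1}=\frac{n-1}{N-1},
\end{equation*}
which is~\eqref{Eq_classic_time_center}.

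The main obstacle is the choice of $k_n$. When $q_n$ coincides with a grid value $F_i$ (for example at $n=1$ or $n=N$), two admissible indices may exist. I will handle this by observing that linear interpolation of a linear function is exact on either bracketing interval. Equivalently, the computation above never used which admissible $k_n$ was chosen, only that $\eta_n$ is given by the displayed formula. So $t_{c_n}$ is well defined regardless of the tie-breaking, and the endpoint cases $k_n=1$ and $k_n=T-1$ need no separate treatment.

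Finally, I substitute $t_{c_n}$ into the phase mapping~\eqref{Eq_centers}. This gives $c_n=\rho_s^{(T-1)\frac{n-1}{N-1}}$, which is~\eqref{Eq_classic_center}. It is the exponentially decaying canonical-phase image of uniform time centers, i.e.\ the standard DMP placement. I would also remark that the value of $C_0$ cancels throughout, so only constancy of the criticality matters.
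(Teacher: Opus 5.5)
Your proof is correct and follows the same route as the paper: under constant criticality you show $F_i=F_T\,t_i$, so the inverse cumulative mapping returns $t_{c_n}=q_n/F_T=(n-1)/(N-1)$, and you then substitute into the phase map \eqref{Eq_centers}. The only difference is that you work out the bracketing index and interpolation weight from \eqref{Eq_kn}--\eqref{Eq_tcenter} explicitly, including the tie-breaking remark, whereas the paper compresses this into the single assertion that the inverse cumulative mapping gives $q_n/F_T$.
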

\begin{proof}
	When $C_i\equiv C_0$, the cumulative criticality becomes
	\begin{equation}
		F_i=(i-1)C_0=F_T t_i,
		\quad F_T=(T-1)C_0 .
	\end{equation}
	Thus, the cumulative criticality is linear with respect to the normalized time. For the target cumulative-criticality level $q_n$ in~\eqref{Eq_qn}, the inverse cumulative mapping gives
	\begin{equation}\label{Eq_classic_1122}
		t_{c_n}=\frac{q_n}{F_T}
		=
		\frac{n-1}{N-1},
		\quad n=1,2,\ldots,N .
	\end{equation}
	This proves~\eqref{Eq_classic_time_center}. Substituting~\eqref{Eq_classic_1122} into the phase mapping in~\eqref{Eq_centers} yields~\eqref{Eq_classic_center}. This completes the proof.
\end{proof}

\subsection{Adaptive Bandwidth Design}
After the stage-criticality-guided centers are determined, the bandwidths are adapted to refine the local support and overlap of the basis functions. Here, the adjacent-center spacing provides a reference bandwidth, while additional scaling factors are introduced to improve local approximation flexibility.

For centers $\{c_n\}_{n=1}^{N}$ ordered in descending canonical phase, the reference bandwidth is defined as
\begin{equation}\label{Eq_h0}
	h_n^{(0)} =
	\frac{2}{(c_{n+1}-c_n)^2},
	\quad n=1,2,\ldots,N-1.
\end{equation}
For $n=N$, the reference bandwidth is set equal to that of the preceding basis function
\begin{equation}\label{Eq_hN}
	h_N^{(0)} = h_{N-1}^{(0)}.
\end{equation}
The effective bandwidth is then parameterized as
\begin{equation}\label{Eq_hn}
	h_n = b_n h_n^{(0)},
\end{equation}
where each scaling factor $b_n$ is selected from a predefined candidate set $\mathcal{B}$. Let
\begin{equation}\label{Eq_bvec}
	\bm{b} =
	[b_1,b_2,\ldots,b_N]^{\top}
\end{equation}
collect all bandwidth scaling factors.

To evaluate a bandwidth configuration $\bm{b}$, the target forcing term is first extracted from the reference trajectory. For the $d$-th task-space dimension, rearranging~\eqref{eq:residual_dmp_discrete_a} gives
\begin{equation}\label{Eq_target_force}
	\hat{f}_{d}(s_i)
	\!=\!
	\frac{\tau}{\Delta t}
	\left(
	z_{d,i+1}-z_{d,i}
	\right)
	\!-\!
	\alpha_z
	\left[
	\beta_z
	\left(
	g_d\!-\!P_{d,i}
	\right)
	\!-\!
	z_{d,i}
	\right],
\end{equation}
where $P_{d,i}$ and $z_{d,i}$ are obtained from the reference trajectory and its associated velocity-related state. For a given $\bm{b}$, the bandwidths are determined by~\eqref{Eq_hn}, and the corresponding basis functions are denoted by $\Psi_n(s_i;\bm{b})$. The reconstructed forcing term is written as
\begin{equation}\label{Eq_tildef}
	\tilde{f}_d(s_i;\bm{b})
	=
	\left(
	\frac{
		\sum_{n=1}^{N}
		w_{n,d}(\bm{b})\Psi_n(s_i;\bm{b})
	}{
		\sum_{n=1}^{N}
		\Psi_n(s_i;\bm{b})
	}
	\right)s_i ,
\end{equation}
where $w_{n,d}(\bm{b})$ denotes the fitted weight associated with the $n$-th basis function under the bandwidth configuration $\bm{b}$. The reconstruction cost is then defined as
\begin{equation}\label{Eq_J}
	\mathcal{J}(\bm{b})
	=
	\frac{1}{D(T-1)}
	\sum_{d=1}^{D}
	\sum_{i=1}^{T-1}
	\left|
	\hat{f}_d(s_i) - \tilde{f}_d(s_i;\bm{b})
	\right|^2 .
\end{equation}

Because overlapping Gaussian basis functions couple $\mathcal{J}(\bm{b})$ across the scaling factors, exhaustive evaluation of all joint candidates is costly. Therefore, $\bm{b}$ is optimized by cyclic coordinate descent. Let $\bar{\bm{b}}=[\bar{b}_1,\ldots,\bar{b}_N]^{\top}$ denote the current scaling vector; when updating its $n$-th entry, all other entries are fixed and each candidate $b\in\mathcal{B}$ is evaluated by defining
\begin{equation}\label{Eq_temp_b}
	\bm{b}^{(n)}(b)
	=
	[ \bar{b}_1,\ldots,\bar{b}_{n-1}, b, \bar{b}_{n+1},\ldots,\bar{b}_N
	]^{\top}.
\end{equation}
The coordinate-wise optimal candidate is selected by
\begin{equation}\label{Eq_coord}
	b_n^{\star}
	=
	\arg \, \min_{b\in\mathcal{B}}\,
	\mathcal{J}\!\left(\bm{b}^{(n)}(b)\right).
\end{equation}
The candidate $b_n^\star$ is accepted only if it reduces the current objective by more than $\delta$. Otherwise, $\bar b_n$ remains unchanged. The coordinate-wise updates are performed cyclically for $n=1,2,\ldots,N$, with each complete cycle constituting one sweep. The procedure terminates when a full sweep accepts no update or when the maximum number of sweeps $\Gamma$ is reached.

For bandwidth optimization, the finite candidate set yields $|\mathcal{B}|^{N}$ feasible scaling vectors. Each accepted update decreases $\mathcal{J}$ by more than $\delta$. This strict decrease prevents the algorithm from revisiting a previous scaling vector and bounds the total number of updates. If a sweep with no accepted update triggers termination before reaching the maximum sweep count $\Gamma$, the result is a coordinate-wise local minimum within tolerance $\delta$, although global optimality is not guaranteed.

\section{Experiments}\label{Experiments}
\subsection{Experimental Setup}
The system for identifying operator-robot interaction stiffness is shown in Fig.~\ref{Fig_ISIS}. Before data collection, four sEMG sensors are placed over the muscle bellies of the biceps brachii, triceps brachii, brachioradialis, and flexor carpi ulnaris, parallel to the muscle fibers and away from tendons and joints. During demonstrations, the operator guides a KUKA iiwa robot through predefined tasks. Robot joint positions and shoulder, elbow, and wrist positions are recorded nominally at approximately 100~Hz, while the sEMG signals are acquired at 2000~Hz. The end-effector trajectory is reconstructed through forward kinematics, the motion-capture data are smoothed using a 50-sample moving average, and each sEMG signal is demeaned, band-pass filtered at 20-350~Hz, full-wave rectified, and low-pass filtered at 5~Hz. The processed signals are then timestamp-aligned with the robot timeline and used to estimate the interaction stiffness according to~\cite{wang2025upper}. 

For STR-Net, we set $\beta_1=\beta_2=1$, $d_{\rm model}=64$, and the convolutional kernel size to 5. The model is trained for 300 epochs using AdamW with a learning rate of $1\times10^{-3}$. The loss combines a Huber-based prior-consistency term with a temporal smoothness term weighted by $\zeta=1$.

\begin{figure}
	\centering
	\includegraphics[width=0.75\columnwidth]{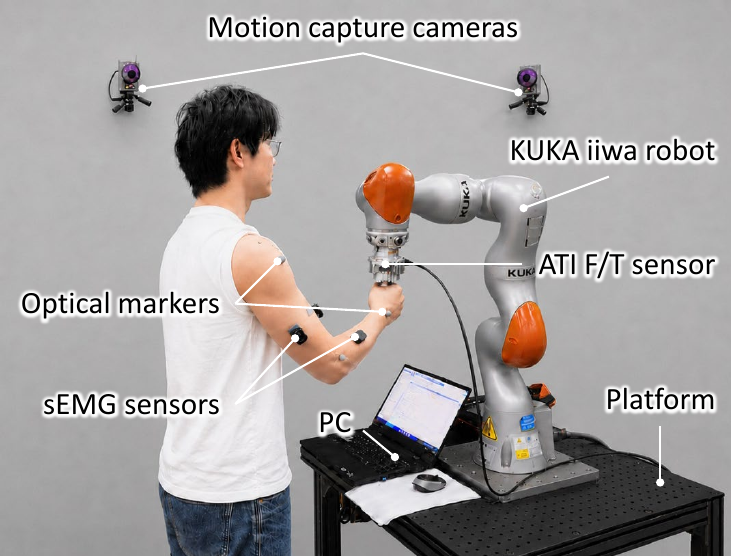}\\
	\caption{Experimental platform for interaction stiffness identification.}
	\label{Fig_ISIS}
	\vspace{-0.4em}
\end{figure}

Moreover, five evaluation metrics are employed to assess trajectory learning and generalization performance.
\begin{itemize}
	\item Root Mean Square Error (RMSE): RMSE measures the overall average pointwise deviation
		\begin{equation}
			\text{RMSE} = \sqrt{ \frac{1}{T} \sum_{i=1}^{T} \left\| \bm{P}(t_i)-\bm{P}^{\text{ref}}(t_i) \right\|_2^2 }.
		\end{equation}
	\item Weighted RMSE (W-RMSE): W-RMSE uses the inferred stage-criticality profile $C(t)$ as weights and is therefore used as an auxiliary criticality-weighted metric
		\begin{equation}
			\text{W-RMSE}  =\! \sqrt{ \sum_{i=1}^{T} \frac{C_i}{\sum_{k=1}^{T} C_k} \left\| \bm{P}(t_i)\!-\!\bm{P}^{\text{ref}}(t_i) \right\|_2^2 }.
		\end{equation}
	\item RMSE in Geometry-critical regions (RMSE-Geo): RMSE-Geo independently evaluates local errors over geometry-critical regions defined by the top 20\% curvature indices of the reference trajectory rather than $C(t)$:
	\begin{equation}
		\text{RMSE-Geo} \!=\!\!
		\sqrt{
			\frac{1}{|\mathcal{I}_{\text{geo}}|}
			\sum_{i\in \mathcal{I}_{\text{geo}}}\!
			\left\|
			\bm{P}(t_i)\!-\!\bm{P}^{\text{ref}}(t_i)
			\right\|_2^2
		}.
	\end{equation}
	\item Maximum Error (MaxE): MaxE captures the worst-case deviation by
	\begin{equation}
		\text{MaxE} = \max_{i=1,2,\ldots,T} \|\bm{P}(t_i)-\bm{P}^{\text {ref}}(t_i)\|_2.
	\end{equation}
	\item Normalized Shape Error (NSE): For generalization evaluation under varying start or target configurations, both the reproduced trajectory $\bm{P}(t_i)$ and the target trajectory $\bm{P}^{\text{tar}}(t_i)$ are normalized by
	\begin{equation}
		\tilde{\bm{P}}(t_i) \!=\!\! \frac{{\bm{P}}(t_i)\!-\!\bm{P}_{\text{start}}}{\|\bm{P}_{\text{end}}\!-\!\bm{P}_{\text{start}}\|}, \,\,
		\tilde{\bm{P}}^{\text{tar}}(t_i) \!=\!\! \frac{\bm{P}^{\text{tar}}(t_i)\!-\!\bm{P}^{\text{tar}}_{\text{start}}}{\|\bm{P}^{\text{tar}}_{\text{end}}\!-\!\bm{P}^{\text{tar}}_{\text{start}}\|}.
	\end{equation}
	Then, NSE is defined as
	\begin{equation}
		\text{NSE} = \frac{1}{T} \sum_{i=1}^{T} \left\| \tilde{\bm{P}}(t_i) - \tilde{\bm{P}}^{\text{tar}}(t_i) \right\|_2.
	\end{equation}
\end{itemize}

\begin{figure}
	\centering
	\includegraphics[width=0.955\columnwidth]{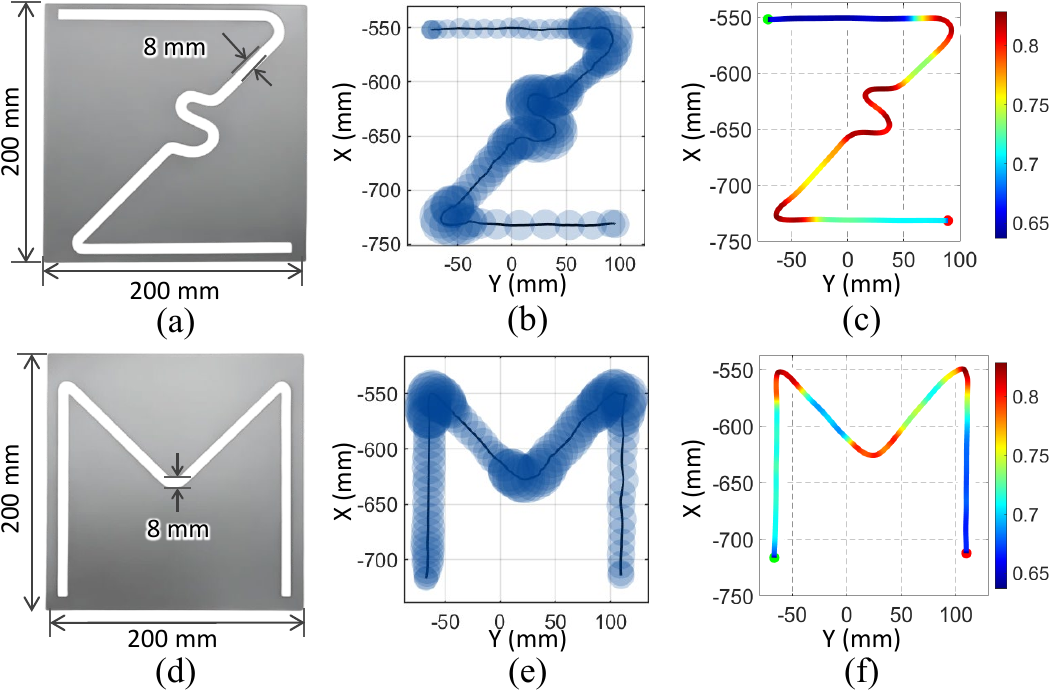}\\
	\caption{Handwriting letter data acquisition. (a)-(c) present the template, the trajectory with stiffness ellipsoids, and the visualized stage-criticality index for letter ``Z''. (d)-(f) show the corresponding results for letter ``M''.}
	\label{Fig_HLE_setup_2}
	\vspace{-0.5em}
\end{figure}

\begin{figure}
	\centering
	\includegraphics[width=0.93\columnwidth]{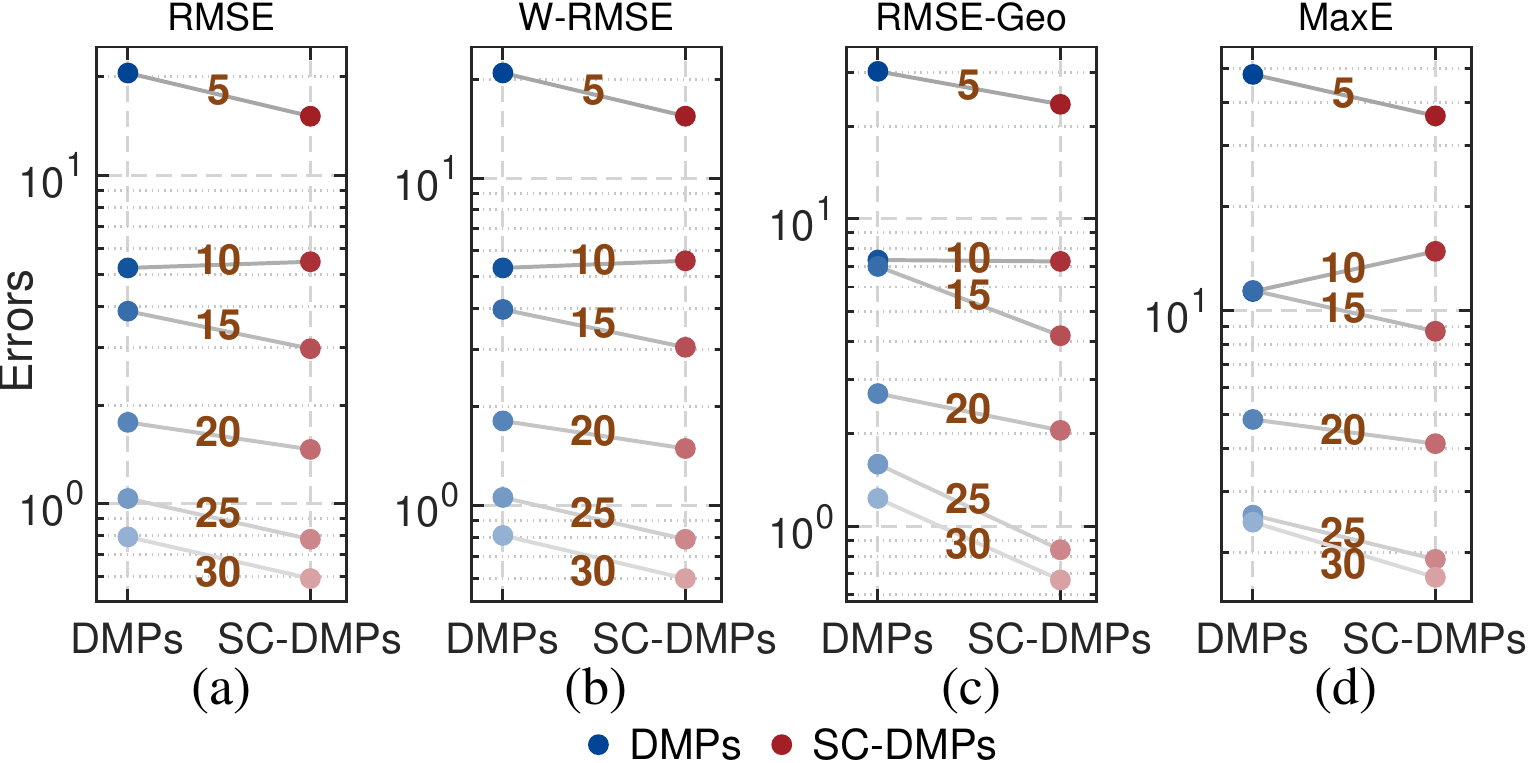}\\
	\caption{Learning errors of SC-DMPs and DMPs for ``Z'' under 5, 10, 15, 20, 25, and 30 basis functions on a logarithmic scale. (a)~RMSE (mm). (b) W-RMSE (mm). (c) RMSE-Geo (mm). (d) MaxE (mm).}
	\label{Fig_letter_Errors_Z}
	\vspace{-0.6em}
\end{figure}
\begin{figure}
	\centering
	\includegraphics[width=0.93\columnwidth]{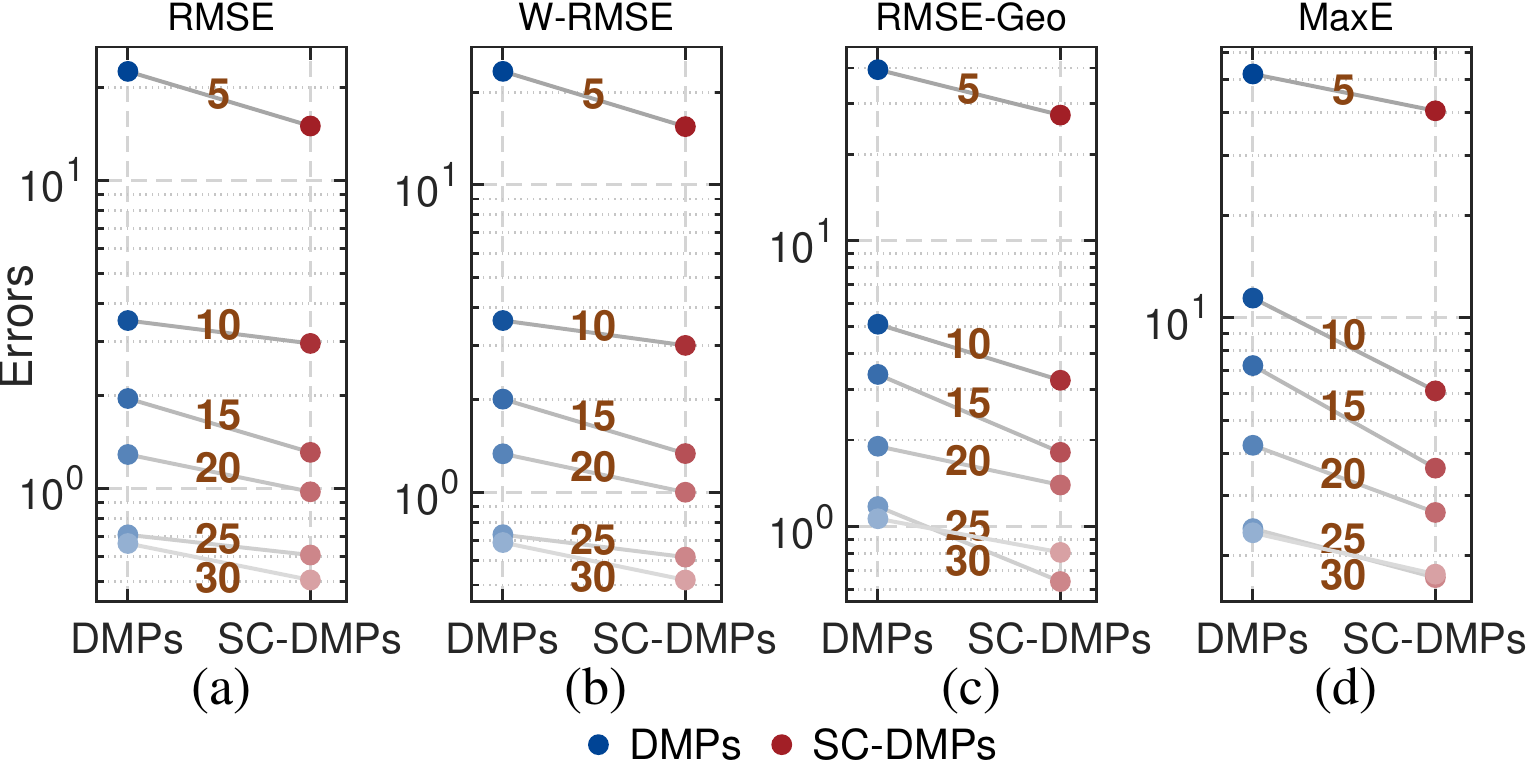}\\
	\caption{Learning errors of SC-DMPs and DMPs for ``M'' under 5, 10, 15, 20, 25, and 30 basis functions on a logarithmic scale. (a)~RMSE (mm). (b) W-RMSE (mm). (c) RMSE-Geo (mm). (d) MaxE (mm).}
	\label{Fig_letter_Errors_M}
	\vspace{-0.6em}
\end{figure}

\subsection{Handwriting Letter Experiment}
Two hollow templates shaped as the letters ``Z'' and ``M'' are used, as shown in Fig.~\ref{Fig_HLE_setup_2}(a) and (d), respectively. During each demonstration, a marker attached to the robot end-effector is manually guided along the letter path while remaining inside the template without touching its boundaries. Twelve demonstrations are collected for each letter, with the marker trajectory recorded and the corresponding interaction stiffness estimated, as shown in Fig.~\ref{Fig_HLE_setup_2}(b) and (e). The GMR reference trajectories obtained from~\eqref{Eq_refxK} and the corresponding stage-criticality profiles are presented in Fig.~\ref{Fig_HLE_setup_2}(c) and (f). For both letters, criticality peaks around turning regions, where small deviations can noticeably distort the letter shape.

\begin{table}[t]
	\centering
	\caption{Statistical analysis of SC-DMPs versus DMPs across different basis-function settings.}
	\renewcommand\arraystretch{0.95}
	\resizebox{0.91\columnwidth}{!}{
		\begin{tabular}{p{1.6cm}<{\centering}p{2.25cm}<{\centering}p{3.1cm}<{\centering}p{1.17cm}<{\centering}}
			\toprule
			Errors & Mean reduction & 95\%~confidence~interval & Adjusted~$p$ \\
			\midrule
			RMSE     & 21.42\% ± 10.11\% & [13.99\%, 25.65\%] & 0.0103  \\
			W-RMSE   & 21.92\% ± 10.27\% & [13.99\%, 26.08\%] & 0.0103  \\
			RMSE-Geo & 32.50\% ± 13.89\% & [23.55\%, 38.91\%] & 0.0020  \\
			MaxE     & 24.67\% ± 20.16\% & [8.57\%, 32.56\%] & 0.0122  \\
			\bottomrule
		\end{tabular}%
	}
	\vspace{-0.4em}
	\label{Tab_letter_Sta}
\end{table}

\subsubsection{Learning Performance and Basis-Allocation Analysis}
SC-DMPs are compared with DMPs using 5, 10, 15, 20, 25, and 30 basis functions. RMSE, W-RMSE, RMSE-Geo, and MaxE are used as evaluation metrics. The results are reported in Figs.~\ref{Fig_letter_Errors_Z} and~\ref{Fig_letter_Errors_M}, with logarithmically scaled vertical axes and brown annotations indicating the numbers of basis functions.

SC-DMPs achieve lower errors in 45 of the 48 letter-basis-metric comparisons. The lower RMSE indicates improved overall reproduction accuracy, while reductions in W-RMSE and RMSE-Geo reflect better fitting in stage-critical and geometrically demanding regions, respectively. The lower MaxE further demonstrates reduced worst-case local deviations. These improvements are most pronounced with fewer basis functions, indicating more efficient use of representational capacity. As the number increases, the performance gap narrows because the additional capacity partially compensates for the uniform center placement of conventional DMPs. Table~\ref{Tab_letter_Sta} summarizes the paired statistical analysis across the 12 letter-basis settings for each metric. SC-DMPs reduce the mean errors by 21.42\%-32.50\% relative to DMPs. All 95\% confidence intervals exclude zero, and all Wilcoxon signed-rank tests remain significant after Holm adjustment.

To illustrate basis allocation and learned trajectories under limited and relatively sufficient representational capacities, 15 and 30 basis functions are selected for visualization. As shown in Figs.~\ref{Fig_centers} and~\ref{Fig_xuexi}, SC-DMPs allocate denser centers to high-criticality regions and sparser ones elsewhere, thereby increasing local approximation support around bends and turning regions. Consequently, the reproduced trajectories better match the reference in these demanding segments.

\begin{figure}
	\centering
	\includegraphics[width=0.86\columnwidth]{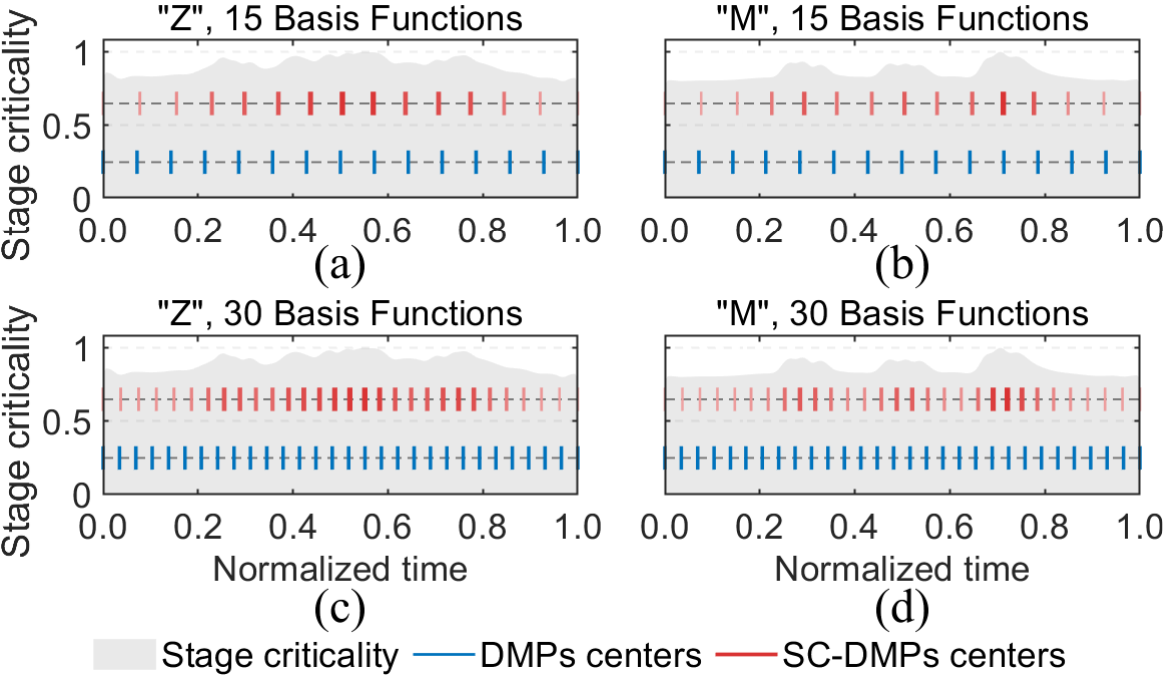}\\
	\caption{Center distributions under 15 and 30 basis functions, where color intensity increases with density. (a) and (c) correspond to ``Z''. (b)~and (d) correspond to ``M''.}
	\label{Fig_centers}
	\vspace{-0.6em}
\end{figure}
\begin{figure}
	\centering
	\includegraphics[width=0.93\columnwidth]{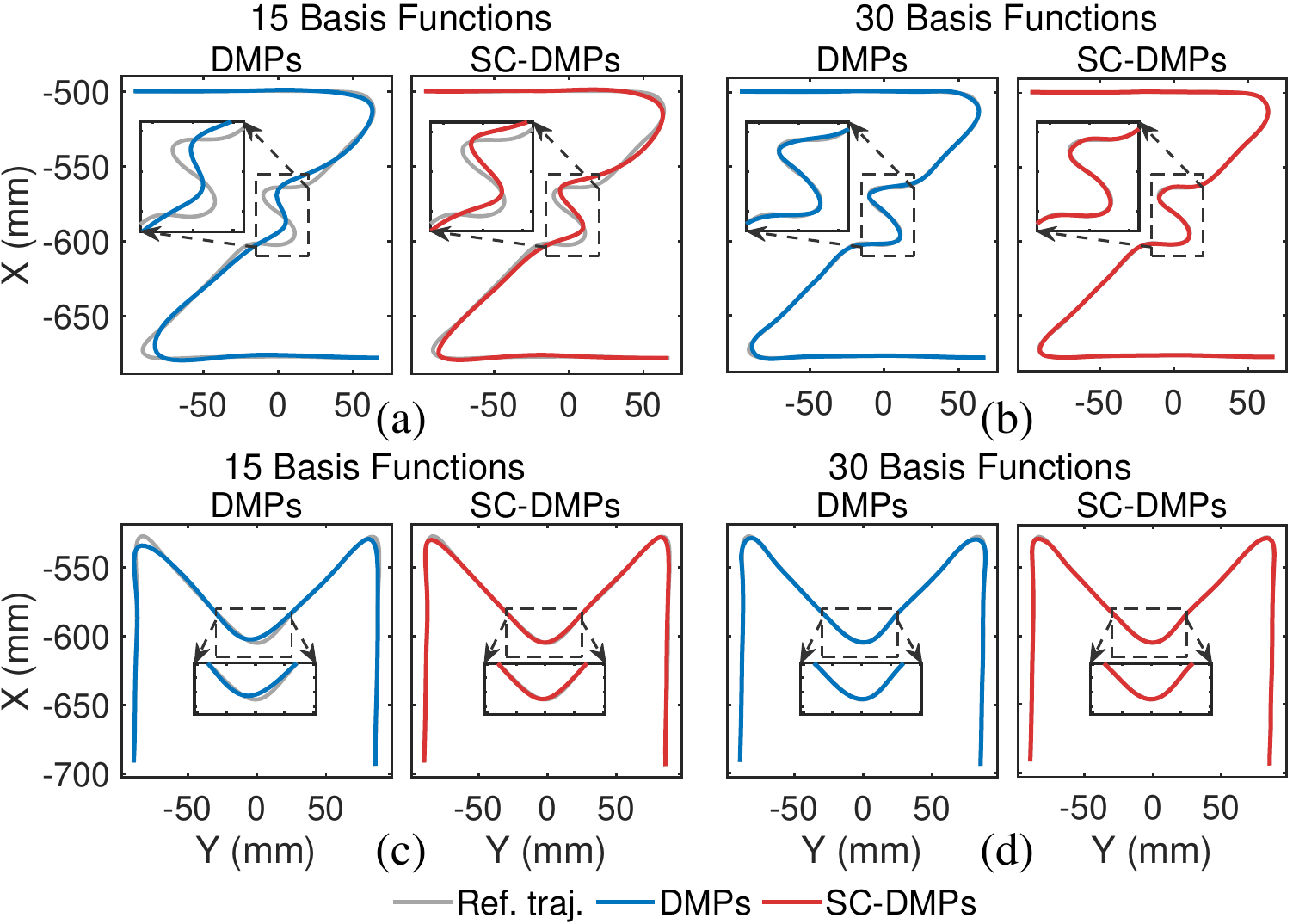}\\
	\caption{Comparison of learning performance between SC-DMPs and DMPs. (a) and (b) correspond to ``Z'' with 15 and 30 basis functions. (c) and (d) correspond to ``M'' with 15 and 30 basis functions.}
	\label{Fig_xuexi}
	\vspace{-0.6em}
\end{figure}

\subsubsection{Ablation Analysis}
This subsection presents ablation studies of the basis-allocation mechanism and stage-criticality formulation using 20 basis functions, a moderate capacity that preserves distinguishable component-level effects without excessive redundancy.

The basis-allocation components are analyzed by comparing classical DMPs, DMPs with bandwidth refinement (DMPs-BW), SC-DMPs with criticality-guided centers only (SC-DMPs-C), and full SC-DMPs, as shown in Fig.~\ref{Fig_Letter_Ablation_1}. DMPs-BW consistently reduces the reproduction errors compared with DMPs, indicating that bandwidth refinement improves local basis support even with standard centers. SC-DMPs-C also reduces the errors for both letters, confirming the contribution of stage-criticality-guided center redistribution. Full SC-DMPs further improve upon SC-DMPs-C for both letters. For ``M'', full SC-DMPs achieve the lowest RMSE, RMSE-Geo, and MaxE, demonstrating the complementary effects of center redistribution and bandwidth refinement. For ``Z'', DMPs-BW yields the lowest errors, indicating that the additional benefit of criticality-guided center redistribution depends on the geometric characteristics of the demonstrated motion.

\begin{figure}
	\centering
	\includegraphics[width=0.91\columnwidth]{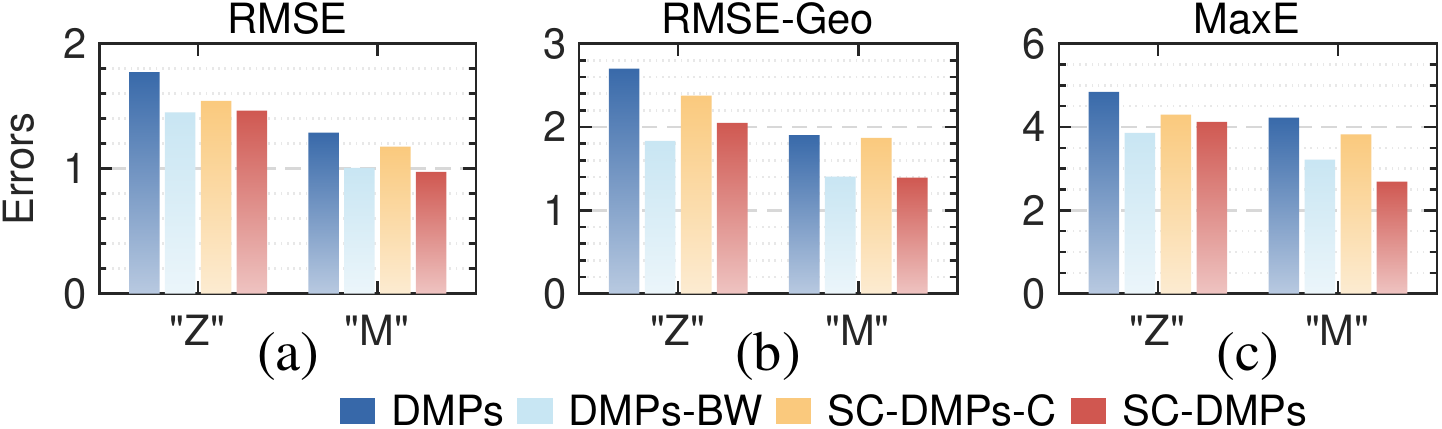}\\
	\caption{Ablation results of basis representation with 20 basis functions. (a)~RMSE (mm). (b) RMSE-Geo (mm). (c) MaxE (mm).}
	\label{Fig_Letter_Ablation_1}
	\vspace{-0.6em}
\end{figure}
\begin{figure}
	\centering
	\includegraphics[width=0.96\columnwidth]{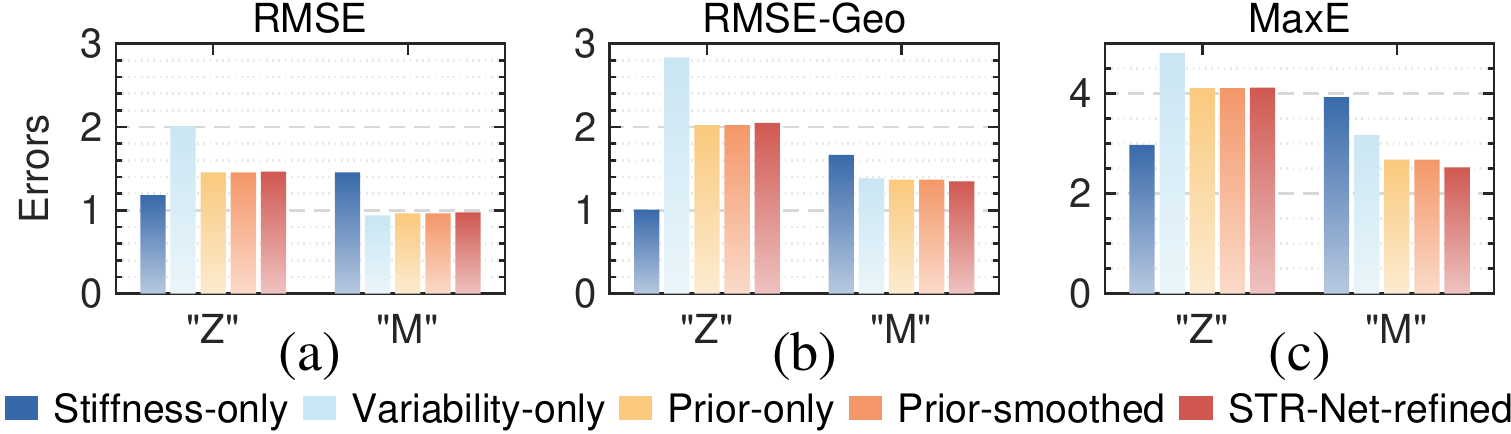}\\
	\caption{Ablation results of stage-criticality construction with 20 basis functions. (a)~RMSE (mm). (b) RMSE-Geo (mm). (c) MaxE (mm).}
	\label{Fig_Letter_Ablation_2}
	\vspace{-0.6em}
\end{figure}
\begin{table}
	\centering
	\caption{Stage-criticality refinement characteristics and construction time.}
	\label{Tab_Ablation}
	\renewcommand\arraystretch{0.90}
	\resizebox{0.95\columnwidth}{!}{
		\begin{tabular}{ p{0.6cm}<{\centering} p{1.89cm}<{\centering} p{2.16cm}<{\centering} p{1.52cm}<{\centering} p{1.3cm}<{\centering}}
			\toprule
			Letter &
			Refinement method &
			Temporal-variation reduction (\%) &
			Critical-stage retention (\%) &
			Construction time (s) \\
			\midrule
			\multirow{3}{*}{``Z''}
			& Prior-only      & 0.00  & 100.00 & $<0.001$ \\
			& Prior-smoothed  & 3.24  & 99.00  & 0.008    \\
			& STR-Net-refined & 20.91 & 91.50  & 1.258    \\
			\midrule
			\multirow{3}{*}{``M''}
			& Prior-only      & 0.00  & 100.00 & $<0.001$ \\
			& Prior-smoothed  & 3.21  & 100.00 & $<0.001$ \\
			& STR-Net-refined & 34.92 & 85.00  & 1.147    \\
			\bottomrule
		\end{tabular}
	}
	\vspace{-0.6em}
\end{table}

The construction of stage criticality is evaluated using stiffness-only, variability-only, prior-only, prior-smoothed, and STR-Net-refined variants. As shown in Fig.~\ref{Fig_Letter_Ablation_2}, stiffness-only performs better on ``Z'', whereas variability-only is more effective on ``M'', indicating that neither cue is consistently reliable across trajectories. Their fusion in the prior-only variant provides more balanced performance. Prior smoothing changes the reproduction errors only marginally, while STR-Net achieves comparable results on ``Z'' and reduces RMSE-Geo and MaxE on ``M'', with a slight increase in RMSE. Table~\ref{Tab_Ablation} further compares temporal-variation (TV) reduction, critical-stage preservation, and construction time. Prior smoothing reduces TV by about 3.2\% while retaining at least 99\% of the prior-defined critical stages. In contrast, STR-Net reduces TV by 20.91\% and 34.92\% for ``Z'' and ``M'', while retaining 91.50\% and 85.00\% of their critical stages, respectively. Thus, STR-Net provides stronger temporal refinement and partially reorganizes the stage-wise allocation priority, thereby improving reproduction accuracy for ``M'', whereas simple smoothing is sufficient for ``Z''. STR-Net requires approximately 1.2~s for offline training and less than 1~ms for inference. This one-time preprocessing cost does not affect subsequent DMP rollout or online control frequency.

\begin{figure}
	\centering
	\includegraphics[width=0.92\columnwidth]{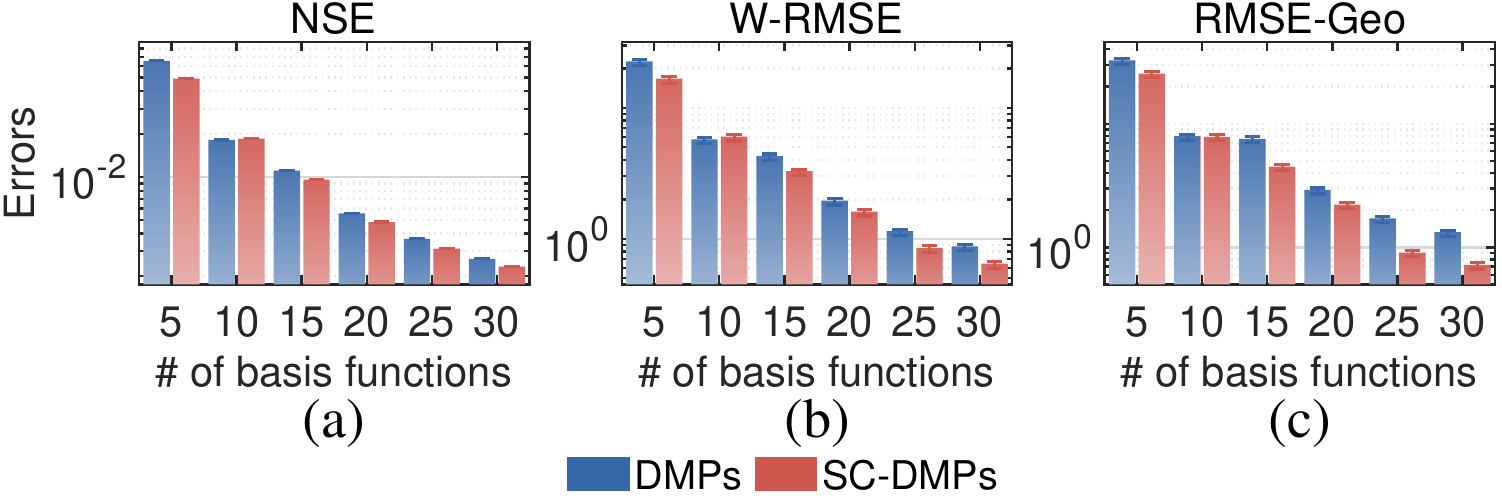}\\
	\caption{Endpoint-generalization errors for ``Z'' under varying numbers of basis functions. (a) NSE. (b) W-RMSE (mm). (c) RMSE-Geo (mm).}
	\label{Fig_Letter_Fanhau_Error_Z}
	\vspace{-0.6em}
\end{figure}
\begin{figure}
	\centering
	\includegraphics[width=0.92\columnwidth]{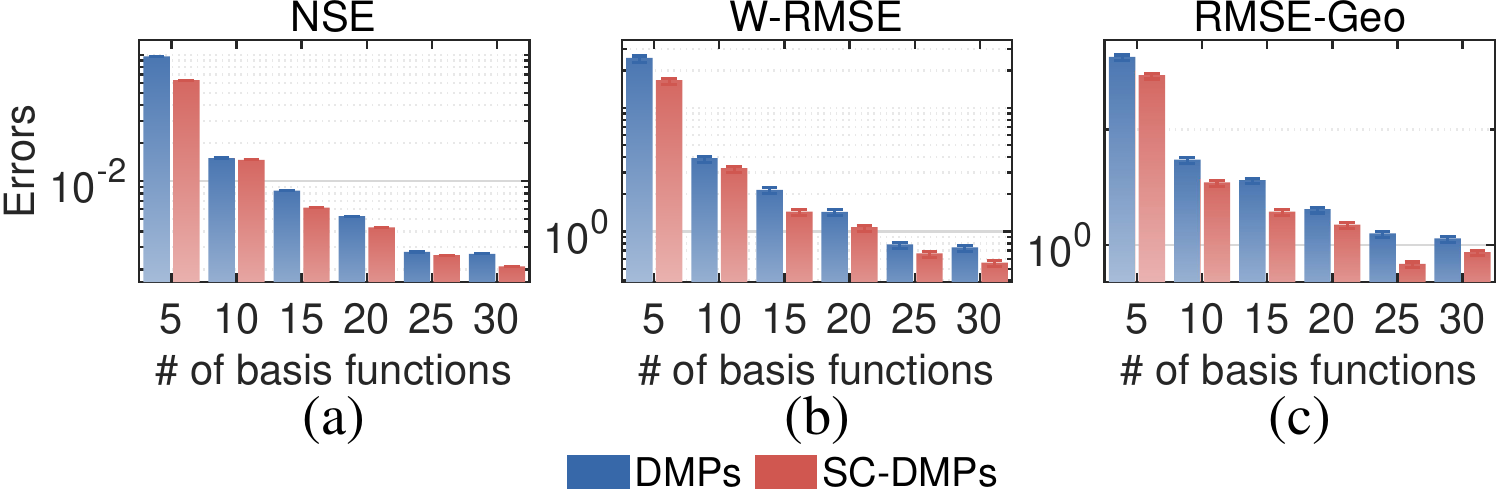}\\
	\caption{Endpoint-generalization errors for ``M'' under varying numbers of basis functions. (a) NSE. (b) W-RMSE (mm). (c) RMSE-Geo (mm).}
	\label{Fig_Letter_Fanhau_Error_M}
	\vspace{-0.6em}
\end{figure}

\subsubsection{Generalization Performance Analysis}
The generalization performance under endpoint variations and spatial scaling is further evaluated. For each letter, the model parameters are first learned and then kept fixed throughout the generalization tests, without additional demonstrations or retraining. 

Four endpoint-variation cases are evaluated, with all displacement magnitudes defined relative to the original start-goal distance: i)~\textit{Translation}, with both endpoints shifted normally by 50\%; ii)-iii)~\textit{Start-point} and \textit{Goal-point changes}, with the corresponding endpoint shifted backward or forward by 10\%; and iv)~\textit{Start-goal change}, combining a 50\% normal shift with symmetric 6\% outward endpoint displacements. Figs.~\ref{Fig_Letter_Fanhau_Error_Z} and~\ref{Fig_Letter_Fanhau_Error_M} report the mean errors and min-max ranges over these cases. SC-DMPs achieve lower average errors in 34 of the 36 comparisons; the only exceptions are small increases in NSE and W-RMSE for ``Z'' with 10 basis functions. With only 5 basis functions, the error reductions range from 21.86\% to 35.22\%. The advantage remains evident when 20 or more basis functions are used, with RMSE-Geo reductions of 23.74\%-47.32\%. Fig.~\ref{Fig_Letter_Fanhua_Z} visualizes the generalized trajectories of ``Z'' with 20 basis functions, where the enlarged local regions show that SC-DMPs more closely follow the target trajectory in locally demanding segments, consistent with their lower W-RMSE and RMSE-Geo.

To further examine spatial generalization, the models learned with 20 basis functions are kept fixed and uniformly scaled about the trajectory bounding-box center by $0.5\times$, $2\times$, $3\times$, and $5\times$. Fig.~\ref{Fig_Letter_Scaling_Z} visualizes the generalized trajectories of ``Z'', showing that SC-DMPs follow the scaled target more closely in locally demanding segments. Table~\ref{Tab_Letter_Scaling} reports the NSE, W-RMSE, and RMSE-Geo for both letters. As expected under uniform scaling, NSE remains unchanged, whereas the W-RMSE and RMSE-Geo increase proportionally with trajectory size. Nevertheless, the relative improvements of SC-DMPs are preserved across all scale factors. Compared with DMPs, SC-DMPs reduce NSE, W-RMSE, and RMSE-Geo by 12.99\%, 17.64\%, and 24.07\% for ``Z'', and by 18.74\%, 25.01\%, and 26.80\% for ``M'', respectively. These results confirm that the accuracy advantage of SC-DMPs is retained from one-half to five times the demonstrated scale.

\begin{figure}[tp]
	\centering
	\includegraphics[width=0.91\columnwidth]{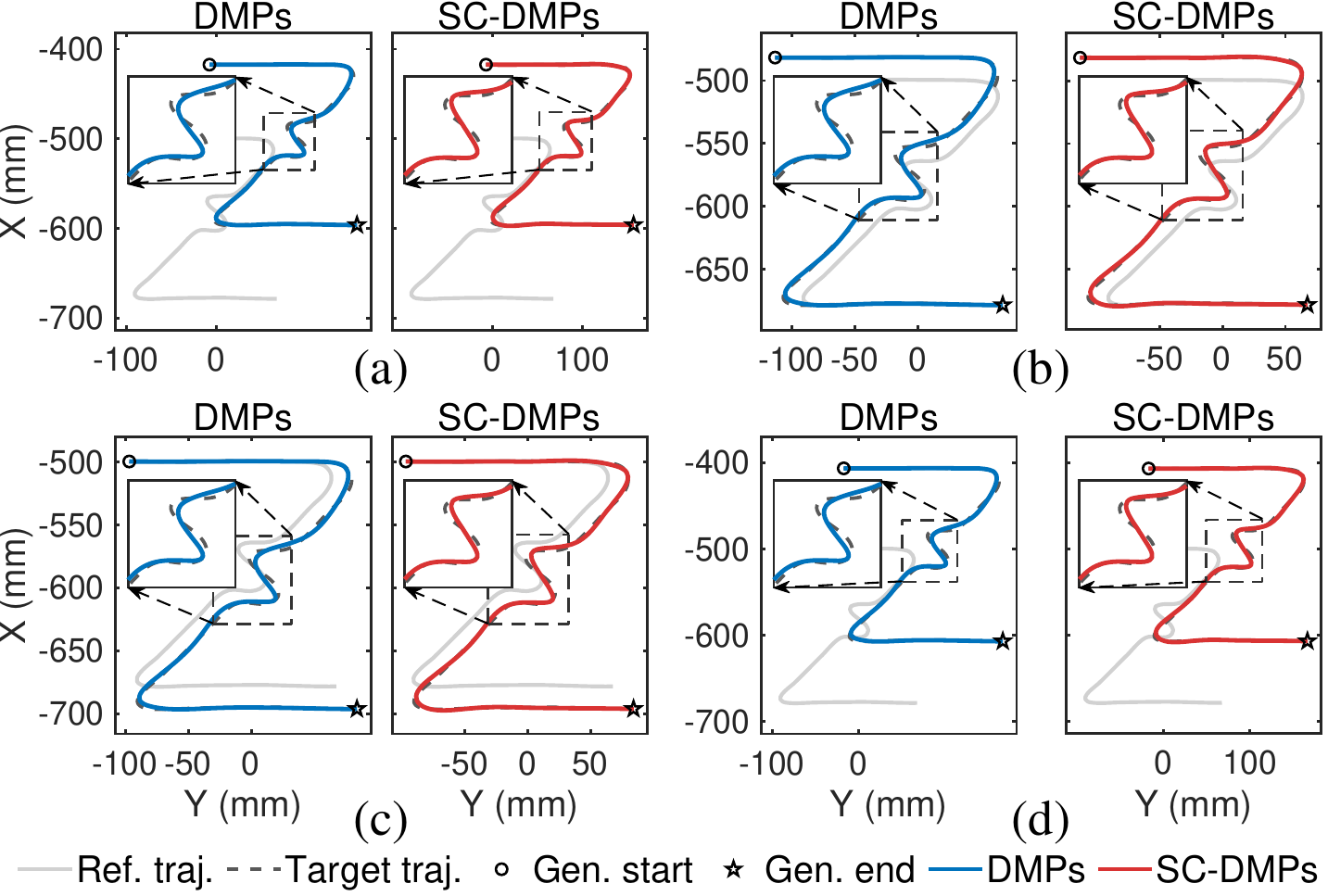}\\
	\caption{Endpoint-generalization of DMPs and SC-DMPs on ``Z'' with 20 basis functions. (a) Translation. (b) Start-point change. (c) Goal-point change. (d) Start-point and goal-point changes.}
	\label{Fig_Letter_Fanhua_Z}
	\vspace{-0.6em}
\end{figure}
\begin{figure}
	\centering
	\includegraphics[width=0.91\columnwidth]{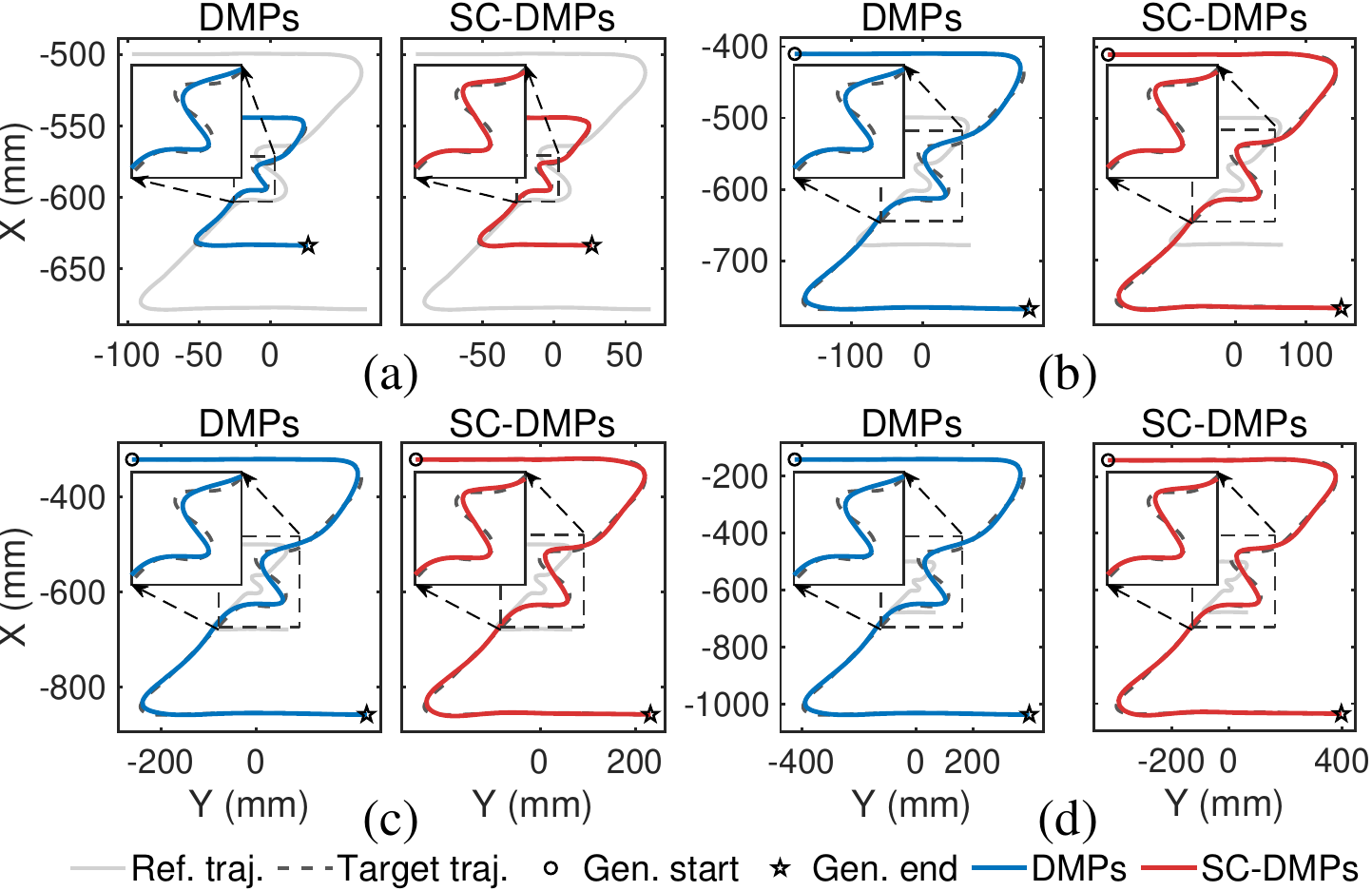}\\
	\caption{Spatial-scaling generalization of DMPs and SC-DMPs on ``Z'' with 20 basis functions. (a) 0.5$\times$. (b) 2$\times$. (c) 3$\times$. (d) 5$\times$.}
	\label{Fig_Letter_Scaling_Z}
	\vspace{-0.6em}
\end{figure}

\begin{table}[htp]
	\centering
	\caption{Errors under spatial-scaling generalization.}
	\label{Tab_Letter_Scaling}
	\renewcommand{\arraystretch}{0.90}
	\resizebox{0.95\columnwidth}{!}{
		\begin{tabular}{p{0.29cm}<{\centering}p{0.8cm}<{\centering}p{0.78cm}<{\centering}p{1.16cm}<{\centering}p{0.78cm}<{\centering}p{1.16cm}<{\centering}p{0.78cm}<{\centering}p{1.16cm}<{\centering}}
			\toprule
			\multirow{2}{*}{Letter}
			& \multirow{2}{*}{Scale}
			& \multicolumn{2}{c}{NSE}
			& \multicolumn{2}{c}{W-RMSE (mm)}
			& \multicolumn{2}{c}{RMSE-Geo (mm)} \\
			\cmidrule(lr){3-4}
			\cmidrule(lr){5-6}
			\cmidrule(lr){7-8}
			& & DMPs & SC-DMPs
			& DMPs & SC-DMPs
			& DMPs & SC-DMPs \\
			\midrule
			\multirow{4}{*}{``Z''}
			& $0.5\times$ & 0.0055 & 0.0048 & 0.9063 & 0.7465 & 1.3508 & 1.0256 \\
			& $2\times$   & 0.0055 & 0.0048 & 3.6251 & 2.9858 & 5.4032 & 4.1025 \\
			& $3\times$   & 0.0055 & 0.0048 & 5.4377 & 4.4787 & 8.1048 & 6.1537 \\
			& $5\times$   & 0.0055 & 0.0048 & 9.0628 & 7.4645 & 13.5080 & 10.2562 \\
			\cmidrule(lr){1-8}
			\multicolumn{2}{c}{Reduction}
			& \multicolumn{2}{c}{12.99\%}
			& \multicolumn{2}{c}{17.64\%}
			& \multicolumn{2}{c}{24.07\%} \\
			\midrule
			\multirow{4}{*}{``M''}
			& $0.5\times$ & 0.0053 & 0.0043 & 0.6678 & 0.5008 & 0.9511 & 0.6962 \\
			& $2\times$   & 0.0053 & 0.0043 & 2.6712 & 2.0032 & 3.8044 & 2.7847 \\
			& $3\times$   & 0.0053 & 0.0043 & 4.0068 & 3.0048 & 5.7066 & 4.1770 \\
			& $5\times$   & 0.0053 & 0.0043 & 6.6780 & 5.0080 & 9.5110 & 6.9617 \\
			\cmidrule(lr){1-8}
			\multicolumn{2}{c}{Reduction}
			& \multicolumn{2}{c}{18.74\%}
			& \multicolumn{2}{c}{25.01\%}
			& \multicolumn{2}{c}{26.80\%} \\
			\bottomrule
		\end{tabular}
	}
	\vspace{-0.6em}
\end{table}

\begin{table*}[t]
	\centering
	\caption{Comparison of learning performance, endpoint generalization, and computational efficiency on letter trajectories.}
	\label{Tab_Recent}
	\renewcommand{\arraystretch}{0.78}
	\resizebox{0.93\textwidth}{!}{%
		\begin{tabular}{p{0.7cm}<{\centering}p{2.0cm}<{\centering}p{1.9cm}<{\centering}p{2.29cm}<{\centering}p{1.96cm}<{\centering}p{2.28cm}<{\centering}p{2.3cm}<{\centering}p{2.68cm}<{\centering}}
			\toprule
			\multirow{2}{*}{Letter} & \multirow{2}{*}{Method} & \multicolumn{2}{c}{Learning performance} & \multicolumn{2}{c}{Endpoint generalization} & \multicolumn{2}{c}{Computational efficiency} \\
			\cmidrule(lr){3-4}\cmidrule(lr){5-6}\cmidrule(lr){7-8}
			& & RMSE (mm) & RMSE-Geo (mm) & NSE & RMSE-Geo (mm) & Model size (kB) & Adaptation time (ms) \\
			\midrule
			\multirow{6}{*}{``Z''} & DMPs & 0.7936 & 1.2318 & 0.0027 & 1.3303 & 1.00 & 4.77 \\
			& ProMPs & 0.6136 & 0.8083 & 0.8927 & 68.3981 & 29.08 & \textbf{0.75} \\
			& ProDMP & 0.6396 & 0.9263 & 0.1004 & 32.1902 & \textbf{0.99} & 1.66 \\
			& GP-MP & 1.1117 & 1.3649 & 0.2599 & 68.2537 & 9.65 & 89.86 \\
			& KMP & 20.6595 & 16.8022 & 0.1696 & 63.0821 & 65.64 & 938.29 \\
			& \textbf{SC-DMPs} & \textbf{0.5927} & \textbf{0.6637} & \textbf{0.0023} & \textbf{0.7168} & 1.00 & 4.71 \\
			\midrule
			\multirow{6}{*}{``M''} & DMPs & 0.6622 & 1.0638 & 0.0027 & 1.1489 & 1.00 & 4.81 \\
			& ProMPs & 0.5478 & 0.8846 & 0.5002 & 55.3787 & 29.08 & \textbf{0.75} \\
			& ProDMP & 0.5124 & 0.8321 & 0.1362 & 39.1928 & \textbf{0.99} & 1.56 \\
			& GP-MP & 1.0058 & 1.1008 & 0.2885 & 56.1967 & 9.65 & 82.52 \\
			& KMP & 12.0098 & 6.5052 & 0.1559 & 42.1927 & 65.64 & 965.69 \\
			& \textbf{SC-DMPs} & \textbf{0.5041} & \textbf{0.8094} & \textbf{0.0021} & \textbf{0.8741} & 1.00 & 4.86 \\
			\bottomrule
		\end{tabular}%
	}
	\vspace{-0.8em}
\end{table*}

\subsubsection{Comparison with Recent Methods}
Classical DMPs and four representative movement-primitive methods, including ProMPs~\cite{paraschos2013probabilistic}, ProDMP~\cite{li2023prodmp}, GP-MP~\cite{jin2023gaussian}, and KMP~\cite{huang2019kernelized}, are compared with SC-DMPs. All four baselines are implemented using their official codebases. For a controlled comparison among the explicit-basis methods, DMPs, ProMPs, ProDMP, and SC-DMPs all use 30 basis functions, while the remaining settings of ProMPs and ProDMP are kept at their defaults. Since GP-MP and KMP are kernel-based methods without explicit basis functions, they retain their native default configurations. The comparison evaluates learning performance, endpoint generalization without retraining, and computational efficiency. Table~\ref{Tab_Recent} reports RMSE and RMSE-Geo for trajectory reproduction, together with the mean NSE and RMSE-Geo over the four endpoint-generalization cases defined above. The online model size and endpoint-adaptation time are also reported. Model size counts the double-precision parameters and support states required after training, excluding original demonstrations, recomputable matrices, and offline-only variables. Adaptation time covers endpoint updating and trajectory generation, excluding training and file loading.

As shown in Table~\ref{Tab_Recent}, SC-DMPs achieve the lowest learning and endpoint-generalization errors for both letters. Relative to the best competing result in each column, SC-DMPs reduce the learning RMSE by 3.4\% and 1.6\%, the learning RMSE-Geo by 17.9\% and 2.7\%, the endpoint-generalization NSE by 11.9\% and 20.4\%, and the endpoint-generalization RMSE-Geo by 46.1\% and 23.9\% for ``Z'' and ``M'', respectively. These results indicate that criticality-guided allocation improves both trajectory reproduction and geometric fidelity under endpoint changes. SC-DMPs require only approximately 1~kB of online model storage, comparable to DMPs and ProDMP and substantially smaller than ProMPs, GP-MP, and KMP. Although ProMPs and ProDMP achieve shorter adaptation times, SC-DMPs remain close to classical DMPs and are considerably faster than GP-MP and KMP. Overall, SC-DMPs achieve a favorable balance between accuracy and model complexity among the evaluated methods.

\begin{table}[t]
	\centering
	\caption{Tested hyperparameter settings and default values.}
	\label{Tab_Parameter_Settings}
	\renewcommand{\arraystretch}{0.82}
	\resizebox{0.90\columnwidth}{!}{
		\begin{tabular}{l l p{1.35cm}<{\centering}}
			\toprule
			Parameter & Tested settings & Default \\
			\midrule
			$\beta_1/\beta_2$
			&
			\begin{tabular}[c]{@{}l@{}}
				0.50/1.50, 0.75/1.25, 1.00/1.00,\\
				1.25/0.75, 1.50/0.50
			\end{tabular}
			& 1.00/1.00 \\
			\midrule
			GMM components
			& 14, 18, 22, 26, 30
			& 22 \\
			\midrule
			Latent dimension (LD)
			& 32, 64, 128
			& 64 \\
			\midrule
			Kernel size (KS)
			& 3, 5, 7
			& 5 \\
			\midrule
			Bandwidth candidate (BW)
			&
			\begin{tabular}[c]{@{}l@{}}
				Narrow: \{0.75{:}0.05{:}1.25\}\\
				Normal: \{0.50{:}0.10{:}1.50\}\\
				Wide: \{0.25{:}0.15{:}1.75\}
			\end{tabular}
			& Normal \\
			\bottomrule
		\end{tabular}
	}
	\vspace{-0.6em}
\end{table}

\subsubsection{Sensitivity and Stability Analysis}
To assess the robustness of SC-DMPs to parameter variations, sensitivity and stability analyses are conducted. Table~\ref{Tab_Parameter_Settings} summarizes the tested settings together with the default values adopted in the preceding experiments. Fig.~\ref{Fig_Letter_Parameter_Sensitivity_Boxplot} reports the RMSE-Geo reduction relative to DMPs, where the boxes summarize the results over the candidate settings and the diamonds indicate the defaults. All tested settings retain positive error reductions for both letters, demonstrating that the advantage of SC-DMPs is not restricted to a particular parameter combination. The kernel-size settings exhibit the smallest variations, with standard deviations of only 0.22 and 0.25 percentage points for ``Z'' and ``M''. Greater variations are observed for the stiffness-variability ratio and the number of GMM components, but their reductions remain positive. The default settings achieve reductions of 45.64\% and 24.32\% for the two letters and provide a consistent configuration rather than a letter-specific optimum.

\begin{figure}
	\centering
	\includegraphics[width=0.92\columnwidth]{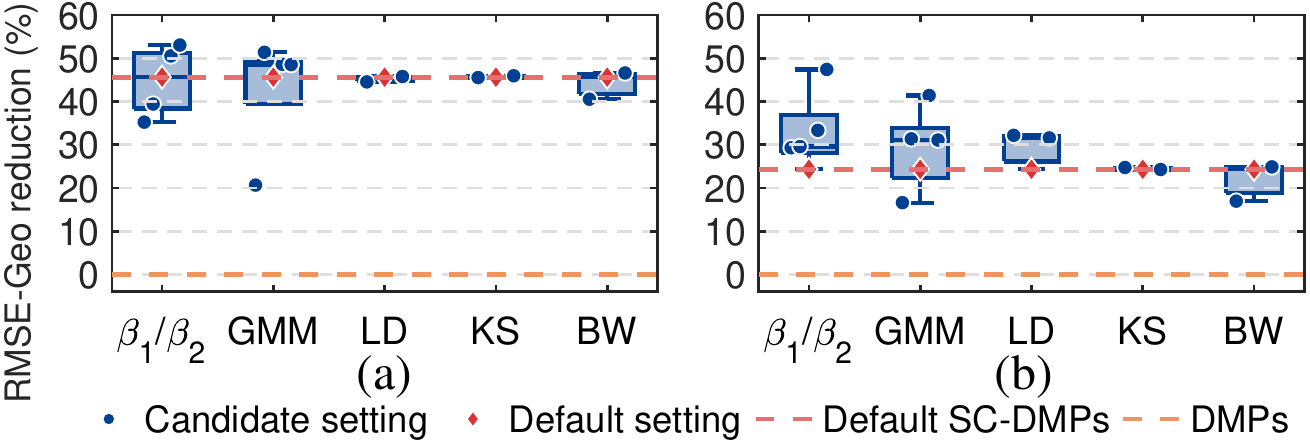}\\
	\caption{Hyperparameter sensitivity of SC-DMPs. (a) ``Z''. (b) ``M''.}
	\label{Fig_Letter_Parameter_Sensitivity_Boxplot}
	\vspace{-0.7em}
\end{figure}
\begin{figure}[t]
	\centering
	\includegraphics[width=0.89\columnwidth]{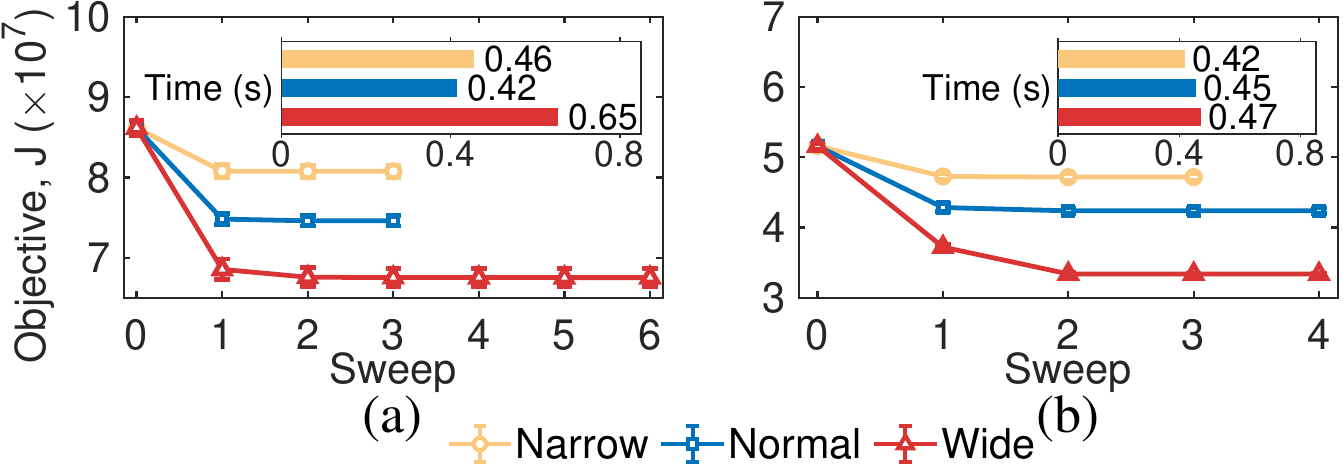}\\
	\caption{Convergence and computational time of bandwidth optimization. (a) ``Z''. (b) ``M''.}
	\label{Fig_Bandwidth_Convergence}
	\vspace{-0.6em}
\end{figure}
\begin{figure}
	\centering
	\includegraphics[width=0.935\columnwidth]{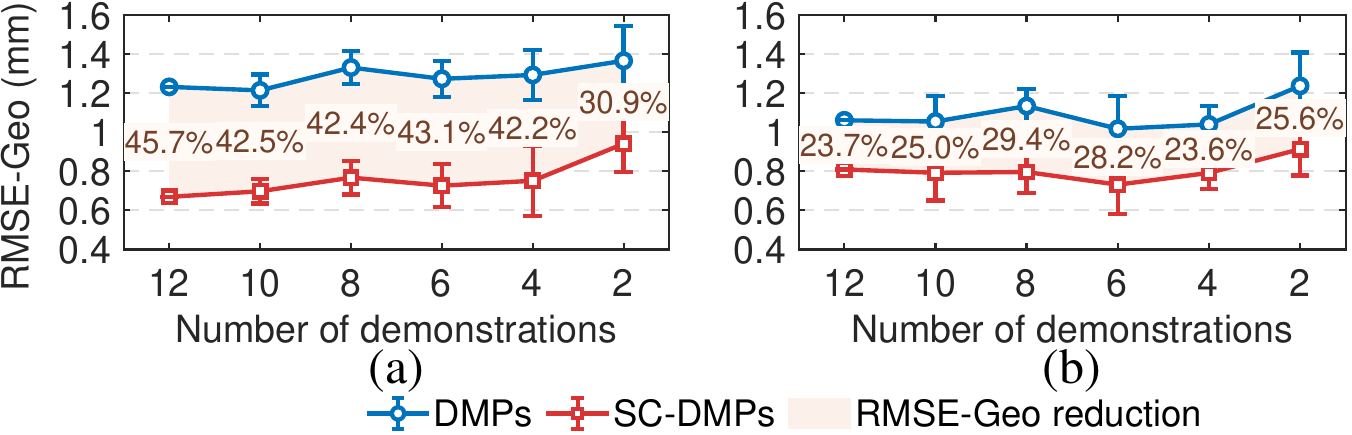}\\
	\caption{Effect of demonstration number on the RMSE-Geo of DMPs and SC-DMPs. (a) ``Z''. (b) ``M''.}
	\label{Fig_Demonstration_Number}
	\vspace{-0.7em}
\end{figure}

The stability of bandwidth optimization is evaluated using the narrow, normal, and wide candidate sets. A coordinate update is accepted only when it decreases the forcing-term residual objective by more than $10^{-8}$, with a maximum of 10 sweeps. Since the candidate space is finite and every accepted update strictly decreases the objective, the procedure terminates at a coordinate-wise solution over the discrete candidate set, although global optimality is not guaranteed. As shown in Fig.~\ref{Fig_Bandwidth_Convergence}, all 18 runs exhibit monotonic objective reduction and terminate before reaching the maximum-sweep limit. Convergence requires 3-6 sweeps and approximately 0.42-0.65~s, while reducing the objective by 6.26\%-35.24\%. These results verify stable finite termination under the tested configurations with a small offline computational cost.

The effect of demonstration number is evaluated using 2, 4, 6, 8, 10, and 12 demonstrations. For each setting from 2 to 10, ten fixed random subsets are tested, with the GMR reference, stage criticality, and movement-primitive models reconstructed independently; the complete dataset is used once for the 12-demonstration case. Fig.~\ref{Fig_Demonstration_Number} reports the mean RMSE-Geo and standard deviation, while the shaded regions and annotations indicate the mean paired reductions achieved by SC-DMPs. SC-DMPs consistently obtain lower errors across all tested demonstration numbers. Even with only two demonstrations, the reductions remain 30.86\%$\pm$8.87\% for ``Z'' and 25.62\%$\pm$9.75\% for ``M''. With 4-12 demonstrations, the mean reductions remain within 42.16\%-45.70\% for ``Z'' and 23.60\%-29.36\% for ``M''. Although different subsets cause some error fluctuations, SC-DMPs consistently outperform DMPs across all tested demonstration numbers.

\begin{figure*}
	\centering
	\includegraphics[width=0.91\textwidth]{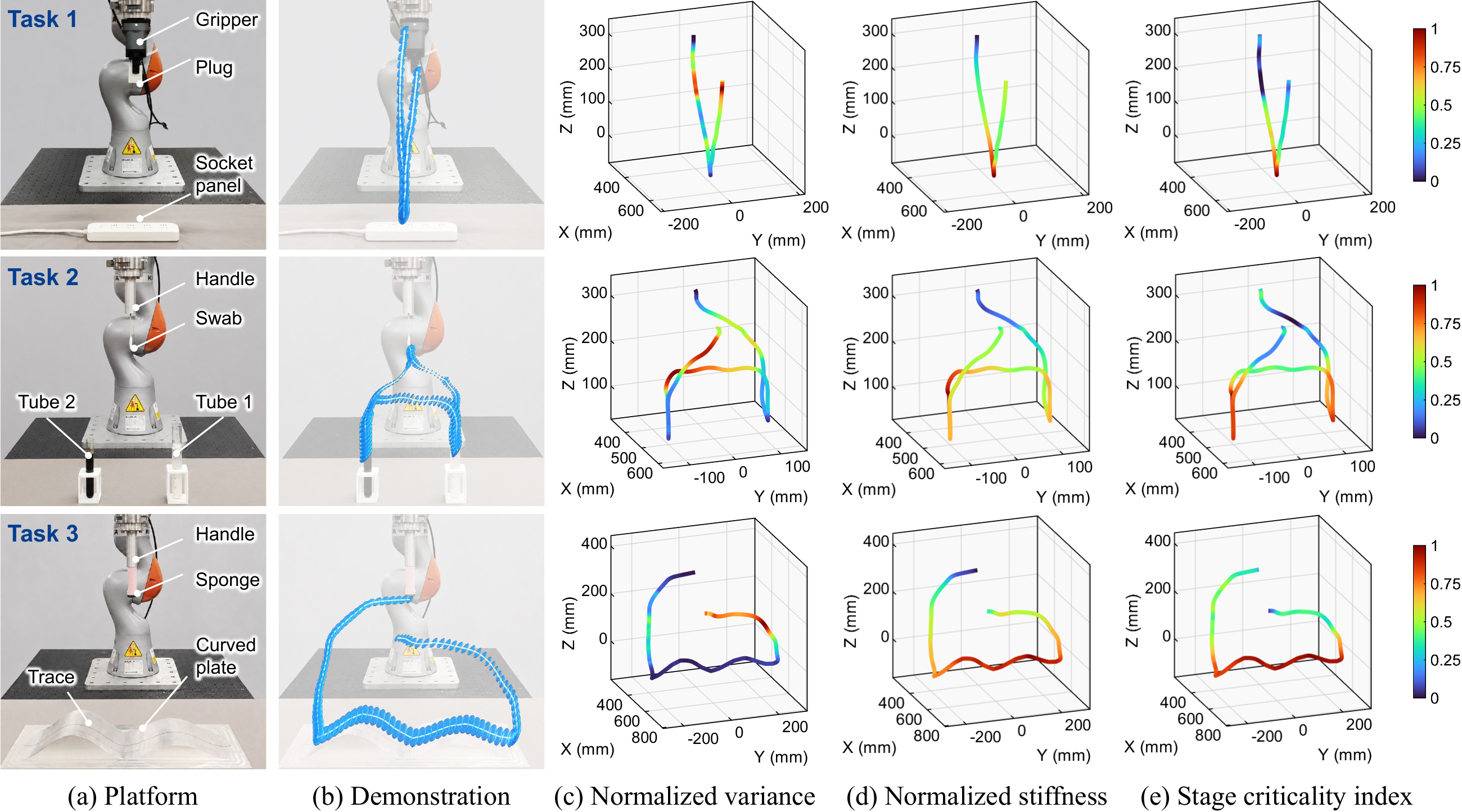}\\
	\caption{Experimental setups and data visualizations for plug insertion and removal, sequential dipping, and curved-surface wiping. (a) Platform. (b) Demonstrated trajectory and stiffness ellipsoids. (c) Normalized variance. (d) Normalized stiffness. (e) Stage criticality index.}
	\label{Fig_Real_Robot}
	\vspace{-0.7em}
\end{figure*}

\begin{figure}
	\centering
	\includegraphics[width=0.78\columnwidth]{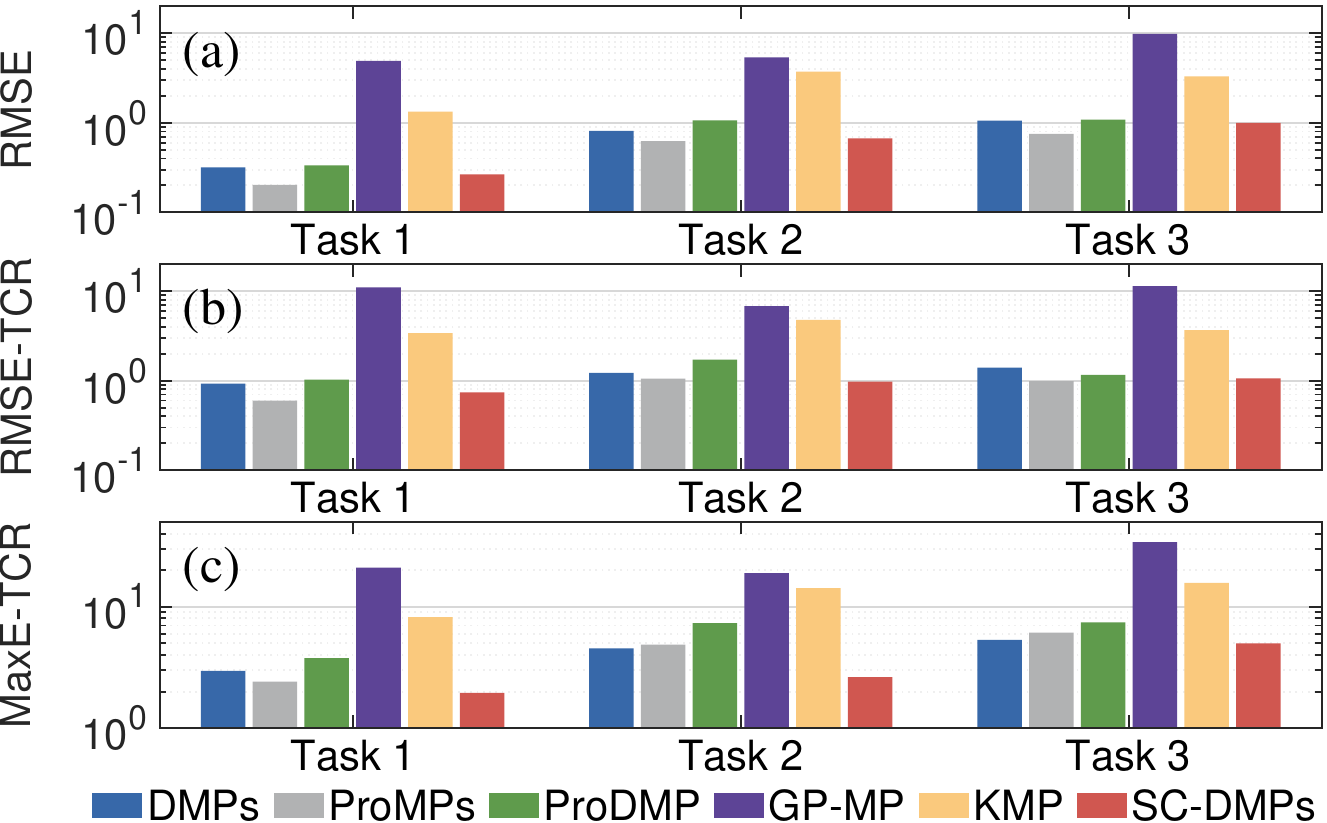}\\
	\caption{Trajectory-learning error comparison on three robotic tasks. (a)~RMSE (mm). (b) RMSE-TCR (mm). (c) MaxE-TCR (mm).}
	\label{Fig_Robot_Main_Accuracy}
	\vspace{-0.7em}
\end{figure}

\subsection{Real-Robot Experiments}
To evaluate the proposed method under different geometric and contact constraints, three real-robot tasks are considered, as shown in Fig.~\ref{Fig_Real_Robot}: i) plug insertion and removal under tight positional constraints; ii) sequential dipping involving two tube insertion-withdrawal stages; and iii) curved-surface wiping under continuous contact. Each task is demonstrated four times via kinesthetic teaching, with the Cartesian trajectory and estimated interaction stiffness synchronously recorded. After uniform resampling to 1000 points, the demonstrations are used to construct the GMR reference trajectory. The variance and stiffness profiles shown in Fig.~\ref{Fig_Real_Robot}(c) and (d) are independently normalized and fused to obtain the stage-criticality index in Fig.~\ref{Fig_Real_Robot}(e). High criticality is concentrated around the plug-socket interaction, tube insertion-withdrawal stages, and surface-contact portion of the wiping trajectory, where geometric and contact constraints impose stricter local reproduction requirements. In contrast, less constrained transfer segments receive lower criticality. The resulting profiles therefore provide a physically interpretable correspondence between the demonstrated interaction characteristics and task-dependent accuracy requirements.

\begin{table}[t]
	\centering
	\caption{STR-Net ablation on robot tasks. All errors are in mm}
	\label{Tab_Robot_STRNet_Ablation}
	\renewcommand{\arraystretch}{0.9}
	\resizebox{0.94\columnwidth}{!}{%
		\begin{tabular}{ccp{0.9cm}<{\centering}cc}
			\toprule
			Task & Refinement & RMSE & RMSE-TCR & MaxE-TCR \\
			\midrule
			\multirow{3}{*}{Task 1} & Prior-only & 0.2709 & 0.7714 & 2.0715 \\
			& Prior-smoothed & 0.2709 & 0.7716 & 2.0719 \\
			& STR-Net-refined & \textbf{0.2666} & \textbf{0.7409} & \textbf{1.9526} \\
			\midrule
			\multirow{3}{*}{Task 2} & Prior-only & 0.6737 & 0.9580 & 2.5812 \\
			& Prior-smoothed & 0.6736 & \textbf{0.9578} & \textbf{2.5807} \\
			& STR-Net-refined & \textbf{0.6709} & 0.9743 & 2.6242 \\
			\midrule
			\multirow{3}{*}{Task 3} & Prior-only & 1.0183 & 1.1171 & 5.3894 \\
			& Prior-smoothed & 1.0186 & 1.1177 & 5.3926 \\
			& STR-Net-refined & \textbf{0.9982} & \textbf{1.0621} & \textbf{4.9892} \\
			\midrule
			\multicolumn{2}{c}{Average reduction vs. prior-only} & 1.38\% & 2.43\% & 4.74\% \\
			\multicolumn{2}{c}{Average reduction vs. prior-smoothed} & 1.40\% & 2.45\% & 4.77\% \\
			\bottomrule
		\end{tabular}%
	}
	\vspace{-0.6em}
\end{table}

For the trajectory-learning comparison, all methods learn the same GMR reference. DMPs, ProMPs, ProDMP, and SC-DMPs use 50 basis functions, whereas GP-MP and KMP retain their native default configurations. RMSE evaluates the overall trajectory error, while RMSE-TCR and MaxE-TCR measure the root-mean-square and worst-case errors within the task-critical regions, respectively. These regions cover the plug insertion-removal stages in Task~1, the tube insertion-withdrawal stages in Task~2, and the surface-contact stage in Task~3. As shown in Fig.~\ref{Fig_Robot_Main_Accuracy}, SC-DMPs produce lower RMSE and RMSE-TCR than classical DMPs on all three tasks. The corresponding reductions range from 5.83\% to 17.51\% for RMSE and from 19.93\% to 23.86\% for RMSE-TCR. SC-DMPs also achieve the lowest MaxE-TCR among all six methods, with reductions of 6.56\%-42.01\% relative to DMPs. Although ProMPs attains lower mean trajectory errors in several comparisons, its MaxE-TCR remains higher in every task. These results highlight the suppression of large deviations in task-critical stages.

To further assess the role of STR-Net under real-robot geometric and contact constraints, Table~\ref{Tab_Robot_STRNet_Ablation} compares the stiffness-variability prior (prior-only), its moving-average refinement (prior-smoothed), and the STR-Net-refined variant, with all other settings unchanged. Prior smoothing produces nearly unchanged errors and no consistent improvement. Relative to prior-only, STR-Net reduces the across-task mean RMSE, RMSE-TCR, and MaxE-TCR by 1.38\%, 2.43\%, and 4.74\%, respectively, with similar gains over prior smoothing. It achieves the lowest errors in Tasks~1 and~3, while slightly lowering RMSE but producing marginally higher critical-region errors in Task~2. These results suggest that STR-Net can provide additional temporal refinement over the tested smoothing baseline.

\section{Conclusion}\label{Conclusion}
This article proposes SC-DMPs, which integrate operator-robot interaction stiffness and cross-demonstration trajectory variability into a stage-criticality index for adaptive basis allocation. Guided by this index, basis centers are redistributed through inverse cumulative criticality, while bandwidth refinement adjusts local support to provide denser representation at high-criticality stages. Experiments on handwriting trajectories and three real-robot tasks demonstrate improved trajectory reproduction, generalization, and task-critical accuracy while retaining the compact and stable DMP formulation.

The current formulation targets precision-oriented skills whose critical stages are reflected in interaction stiffness and trajectory consistency. Extending it to low-stiffness critical stages associated with compliant behavior, task dynamics, or environmental context may require complementary force, vision, object-dynamics, and contextual cues.

\bibliographystyle{IEEEtran}
\bibliography{IEEEabrv,mybib}

\end{document}